\newtheorem{Theorem}{Theorem}[section]
\newtheorem{Definition}{Definition}[section]
\newtheorem{Proposition}{Proposition}[section]
\newtheorem{Assumption}{Assumption}[section]
\theoremstyle{remark}
\begin{document}

\renewcommand{\thefootnote}{\fnsymbol{footnote}}
\runningauthor{J. Zhang, R. Ding, Q. Fu, B. Huang, Z. Deng, Y. Hua, H. Guan, S. Han, D. Zhang}

\twocolumn[

\aistatstitle{Learning Identifiable Structures Helps Avoid Bias in DNN-based Supervised Causal Learning}

\aistatsauthor{ Jiaru Zhang\footnotemark{} \And Rui Ding\footnotemark{} \And Qiang Fu  }
\aistatsaddress{ Shanghai Jiao Tong University \\\texttt{jiaruzhang@sjtu.edu.cn} \And  Microsoft \\\texttt{juding@microsoft.com} \And  Microsoft\\ \texttt{qifu@microsoft.com} } 
\aistatsauthor{Huang Bojun \And Zizhen Deng\And   Yang Hua }
\aistatsaddress{Sony Research \\ \texttt{bojhuang@gmail.com} \And Peking University \\ \texttt{dengzizhen557@outlook.com}  \And Queen’s University Belfast \\ \texttt{y.hua@qub.ac.uk}} 
\aistatsauthor{ Haibing Guan \And Shi Han \And Dongmei Zhang }
\aistatsaddress{Shanghai Jiao Tong University \\ \texttt{hbguan@sjtu.edu.cn}\And Microsoft\\ \texttt{shihan@microsoft.com} \And Microsoft \\ \texttt{dongmeiz@microsoft.com}} 
]

\footnotetext[1]{The work was done during his internship at Microsoft Research Asia.}
\footnotetext[2]{Corresponding author.}

\renewcommand{\thefootnote}{\arabic{footnote}}
\setcounter{footnote}{0}
\begin{abstract}
Causal discovery is a structured prediction task that aims to predict causal relations among variables based on their data samples.
Supervised Causal Learning (SCL) is an emerging paradigm in this field.
Existing Deep Neural Network (DNN)-based methods commonly adopt the “Node-Edge approach”,
in which the model first computes an embedding vector for each variable-node, then uses these variable-wise representations to concurrently and independently predict for each directed causal-edge.
In this paper, we first show that this architecture has some systematic bias that cannot be mitigated regardless of model size and data size. 
We then propose SiCL, a DNN-based SCL method that predicts a skeleton matrix together with a v-tensor (a third-order tensor representing the v-structures). According to the Markov Equivalence Class (MEC) theory, both the skeleton and the v-structures are \emph{identifiable} causal structures under the canonical MEC setting, so predictions about skeleton and v-structures do not suffer from the identifiability limit in causal discovery, thus SiCL can avoid the systematic bias in Node-Edge architecture, and enable consistent estimators for causal discovery. Moreover, SiCL is also equipped with a specially designed pairwise encoder module with a unidirectional attention layer to model both internal and external relationships of pairs of nodes. Experimental results on both synthetic and real-world benchmarks show that SiCL significantly outperforms other DNN-based SCL approaches.
\end{abstract}

\section{Introduction} \label{sec:int}

Causal discovery seeks to infer causal structures from an observational data sample.
Supervised Causal Learning (SCL) \citep{dai2023ml4c,ke2023learning,ma2022ml4s} is an emerging paradigm in this field. The basic idea is to consider causal discovery as a \emph{structured prediction} task, and to train a prediction model using supervised learning techniques. 
At training time, a training dataset comprising a variety of causal mechanisms and their associated data samples is generated. %, either via synthetic generation, or from a simulator if available \citep{lorchamortized}. 
The prediction model is then trained to take such a data sample as input, and to output predictions about the causal mechanism behind the data sample.
%During the inference stage, the causal structure is identified by simply applying the learned model to the target data.  
Compared to traditional rule-based or unsupervised 
%causal discovery 
methods~\citep{glymour2019review}, the SCL method has demonstrated strong empirical performance~\citep{dai2023ml4c,ma2022ml4s}, as well as robustness against sample size and distribution shift \citep{ke2023learning,lorchamortized}.

Deep Neural Network (DNN)-based SCL employs DNN as the prediction model. It allows end-to-end training, removing the need for manual feature engineering. Additionally, it can handle both continuous and discrete data types effectively, and can learn latent representations.
A specific DNN architecture, first introduced by \cite{lorchamortized}, is particularly popular in recent DNN-based SCL works. The model first transforms the given data sample into a set of node-wise feature vectors, each representing an individual variable (corresponding to a node in the associated causal graph).
Based on these node-wise features, the model then outputs a weighted adjacency matrix $A$, where $A_{ij}\in[0,1]$  is an estimated probability for the directed edge $i \rightarrow j$ (meaning that $i$ is a direct cause of $j$). 
% The final DAG is obtained as a Bernoulli sample of $A$.
Finally, the adjacency matrix of an inferred causal graph $G$ is obtained as a Bernoulli sample of $A$, where each entry $G_{ij} \in \{0,1\}$ is sampled \emph{independently}, following probability $A_{ij}$. 
\textcolor{black}{For convenience, we call} such a model architecture as the ``Node-Edge'' \textcolor{black}{architecture}, as the representation is learned for individual nodes and the probability is estimated and sampled for individual directed edges. 

Despite its popularity and encouraging results~\citep{lorchamortized,Zhu2020Causal,dpdag,varamballydiscovering}, we identify two limitations for the Node-Edge approach:

\textcolor{black}{First, the Node-Edge architecture imposes a fundamental bias in the inferred causal relations. Specifically, given an observational data sample $D$, the existence of a directed causal edge $i \rightarrow j$ may \emph{necessarily} depend on the existence of other edges. But the existing Node-Edge models predict each edge separately and independently, so the probability prediction $A_{ij}$ made by such models is only conditioned on the input sample $D$, not on the sampling result of other entries of $A$,  thereby failing to capture the crucial inter-edge dependency in its probability estimation.
}

As a simple example, a Node-Edge model maintaining the possibility of both $G_1: X\rightarrow T \rightarrow Y$ and $G_2: X\leftarrow T \leftarrow Y$ would necessarily have a non-zero probability to output the edges $X\rightarrow T$ and $T \leftarrow Y$, thus cannot rule out the possibility of $G_3: X\rightarrow T \leftarrow Y$, even though $G_3$ is impossible to be the groundtruth causal graph behind a data sample $D$ compatible with $G_1$ and $G_2$~\citep{verma1990equivalence}. 
Crucially, there is no way to tell $G_1$ from $G_2$ based on observational data in general cases~\citep{andersson1997characterization,meek1995strong}. 
It means that for any Node-Edge model to be sound, it has to maintain the possibility of both $G_1$ and $G_2$ (when observing a data sample compatible with any of them), leading to an inevitable error probability to output the impossible graph $G_3$ on the other hand.

\textcolor{black}{Second, the Node-Edge architecture does not explicitly represent the features about node pairs, which we argue are essential for observational causal discovery.}
For example, a causal edge $X\rightarrow Y$ can exist only if the node pair $\langle X, Y\rangle$ demonstrates \textit{persistent dependency} \citep{ma2022ml4s,spirtes2000causation}, meaning that $X$ and $Y$ remain statistically dependent regardless of conditioning on any subset of other variables. As another example, for causal DAGs, a sufficient condition to determine the causal direction between a persistently dependent node pair $\langle X, Y\rangle$ is that $X$ and $Y$ exhibits \emph{orientation asymmetry}, meaning that there exists a third variable $Z$ such that $X$ is persistently dependent to $Z$ but $Y$ can become independent to $Z$ conditioned on a variable-set $\mathbf{S}\not\ni X$ (or vice versa). A feature like persistent dependency or orientation asymmetry is, in its nature, a collective property of a node pair, but not of any individual node alone. 

To address these limitations, in this paper, we propose a novel DNN-based SCL approach, called Supervised Identifiable Causal Learning (SiCL). 
\textcolor{black}{The neural network in SiCL does not seek to predict the probabilities of directed edges, but tries to predict a skeleton matrix together with v-tensor, a third-order tensor representing the v-structures.}
According to the Markov Equivalence Class (MEC) theory of causal discovery, skeleton and v-structures are \emph{identifiable} causal structures under the canonical MEC setting (while the directed edges are not), so predictions about skeleton and v-structures do not suffer from the (non-)identifiability limit. 
By leveraging this insight, our theory-inspired DNN architecture completely avoids the systematic bias in edge-prediction models as previously discussed, and enables \emph{consistent} neural-estimators\footnote{Recall that a statistical estimator is \emph{consistent} if it converges to the groundtruth given infinite data.} for causal discovery. 
Moreover, SiCL is also equipped with a specially designed pairwise encoder module with a unidirectional attention layer.
With both node features and node-pair features as the layer input, it can model both internal and external relationships of pairs of nodes.
%SiCL is trained on synthetic data, and it shows significant improvement on both skeleton prediction and orientation tasks compared to other DNN-based approaches. 
Experimental results on both synthetic and real-word benchmarks show that \textcolor{black}{SiCL} can effectively address the two \textcolor{black}{above-mentioned limitations}, and the resulted SiCL solution significantly outperforms other DNN-based SCL approaches with more than 50\% performance improvement in terms of SHD (Structural Hamming Distance) on the real-world Sachs data. The codes are publicly available at \url{https://github.com/microsoft/reliableAI/tree/main/causal-kit/SiCL}.

\section{Background and Related Work}\label{sec:bg}
A Causal Graphical Model is defined by a joint probability distribution $P$ over multiple random variables and a DAG $G$. Each node $X_i$ in $G$ represents a variable in $P$, and a directed edge $X_i \rightarrow X_j$ represents a direct cause-effect relation from $X_i$ to $X_j$.
A causal discovery task generally asks to infer about $G$ from an i.i.d. sample of $P$.

%\subsection{Identifiability} 
However, there is a well-known identifiability limit for causal discovery.
In general, the causal DAG is only identifiable up to an equivalence class. 
Studies of this identifiability limit under a canonical assumption setting have led to the well-established MEC theory \citep{frydenberg1990chain,verma1990equivalence}.
We call a causal feature, \textit{MEC-identifiable}, if the value of this feature is invariant among the equivalence class under the canonical MEC assumption setting. 
It is known that such MEC-identifiable features include the skeleton and the set of v-structures, which we briefly present in the following.
% that the causal DAG $G$ is, in general, only identifiable up to its Markov equivalence class (MEC).
% Studies of this identifiability limit have led to a well-established theory \citep{frydenberg1990chain,verma1990equivalence}, which we briefly present in the following.

A \textit{skeleton} $E$ defined over the data distribution $P$ is an undirected graph where an edge exists between $X_i$ and $X_j$ if and only if $X_i$ and $X_j$ are always dependent in $P$, i.e., $\forall Z \subseteq\left\{X_1, X_2, \cdots, X_d\right\} \backslash \left\{X_i, X_j \right\}$, we have $X_i \nperp X_j | Z$.
Under mild assumptions (such as that $P$ is Markovian and faithful to the DAG $G$; see details in Appendix Sec. \ref{sec:da}), 
the skeleton is the same as the corresponding undirected graph of the DAG $G$ \citep{spirtes2000causation}. 
A triple of variables $\langle X, T, Y \rangle$ is an \textit{Unshielded Triple (UT)} if $X$ and $Y$ are both adjacent to $T$ but not adjacent to each other in (the skeleton of) $G$.
It becomes a \textit{v-structure} denoted as $X \rightarrow T \leftarrow Y$ if the directions of the edges are from $X$ and $Y$ to $T$.

Two graphs are Markov equivalent if and only if they have the same skeleton and v-structures. 
The \textit{Markov equivalence class (MEC)} can be represented by a \textit{Completed Partially Directed Acyclic Graph (CPDAG)} consisting of both directed and undirected edges. We use $CPDAG(G)$ to denote the CPDAG derived from $G$.
% \end{Definition}
According to the theorem of Markov completeness \citep{meek1995strong}, 
%assuming causal sufficiency and $P$ is Markovian and faithful w.r.t. $G$, 
we can only identify a causal graph up to its MEC, i.e., the CPDAG, unless additional assumptions are made (see the remark below).
%, for disc)ete data or linear Gaussian data. 
This means that each (un)directed edge in $CPDAG(G)$ indicates a (non)identifiable causal relation.

\textbf{Remark:} The MEC-based identifiability theory is applicable in the general-case setting, when we take into account all possible distributions $P$. 
It is known this identifiability limit could be broken (i.e., an undirected edge in the CPDAG could be oriented) \textit{if} we assume that the data follows some special class of distributions, e.g., linear non-Gaussians, additive noise models, post-nonlinear or location-scale models~\citep{peters2014causal,shimizu2011directlingam,zhang2009identifiability,immer2023identifiability}. 
These assumptions are sometimes hard to verify in practice, so this paper considers the general-case setting.
More discussions on the identifiability and causal assumptions are provided in Appendix Sec. \ref{sec:dica}.

\subsection{Related Work}
In traditional methods of causal discovery, constraint-based methods are mostly related to our work.
%are categorized into constraint-based, score-based and continuous optimization. 
They aim to identify the DAG that is consistent with inter-variable conditional independence constraints. 
These methods first identify the skeleton and then conduct orientation based on v-structure identification \citep{yu2016review}. 
The output is a CPDAG which represents the MEC.
Notable algorithms in this category include PC \citep{spirtes2000causation}, along with variations such as Conservative-PC \citep{ramsey2012adjacency}, PC-stable \citep{colombo2014order}, and Parallel-PC \citep{le2016fast}. 
Compared to constraint-based methods, both our approach and theirs are founded upon the principles of MEC theory for estimating skeleton and v-structures. However, whereas traditional methods rely on symbolic reasoning based on explicit constraints, we employ DNNs to capture the essential causal information intricately linked with these constraints.

Score-based methods aim to find an optimal DAG according to a predefined score function, subject to combinatorial constraints. 
These methods employ specific optimization procedures such as forward-backward search GES \citep{chickering2002optimal}, hill-climbing \citep{koller2009probabilistic}, and integer programming \citep{cussens2011bayesian}.
Continuous optimization methods transform the discrete search procedure into a continuous equality constraint.
NOTEARS \citep{zheng2018dags} formulates the acyclic constraint as a continuous equality constraint and is further extended by DAG-GNN \citep{yu2019dag}, DECI \citep{geffner2022deep} to support non-linear causal relations. 
DECI \citep{geffner2022deep} is a flow-based model which can perform both causal discovery and inference on non-linear additive noise data.
% These methods can be viewed as unsupervised optimization since they do not access additional datasets associated with ground-truth causal relations. 
Recently, ENCO \citep{lippe2021efficient} is proposed as a continuous optimization method where the edge orientation is modeled as a separate parameter to maintain the acyclicity.
It is guaranteed to converge to the correct graph if interventions on all variables are available.
 \textcolor{black}{RL-BIC \citep{Zhu2020Causal} utilizes Reinforcement Learning to search for the optimal DAG.}
These methods can be viewed as unsupervised since they do not access additional datasets associated with ground truth causal relations.
We refer to \cite{glymour2019review,vowels2022d} for a thorough exploration of this literature.

SCL begins from orienting edges in the bivariate cases under the functional causal model formalism. 
Methods such as RCC \citep{lopez2015randomized} and NCC \citep{lopez2017discovering} have outperformed unsupervised approaches like ANM \citep{hoyer2008nonlinear} or IGCI \citep{janzing2012information}.
For multivariate cases, ML4S \citep{ma2022ml4s} proposes a supervised approach specifically for skeleton learning. 
%It employs an order-based cascade learning procedure and generates training data from vicinal graphs. 
Complementary to ML4S, ML4C \citep{dai2023ml4c} takes both data and skeleton as input and classifies unshielded triples as either v-structures or non-v-structures. 
\cite{petersen2023causal} proposes a SLdisco method, utilizing SCL approach to address some limitations of PC and GES.

DNN-based SCL has emerged as a prominent approach for enabling end-to-end causal learning. 
Two notable works in this line, namely AVICI \citep{lorchamortized} and CSIvA \citep{ke2023learning}, introduced an alternating attention mechanism to enable permutation invariance across samples and variables. Both methods learn individual representation for each node, which is then used to predict directed edges. Among them, AVICI considers the task of predicting DAG from observational data and adopts exactly the Node-Edge architecture, hence suffers from the issues as discussed in Sec. \ref{sec:int}. On the other hand, CSIvA requires additional interventional data as input to identify the full DAG, and applies an autoregressive DNN architecture where edges are predicted sequentially by multiple inference runs. Therefore, this autoregressive approach incurs very high inference cost due to the quadratic number of model runs required (w.r.t. the number of variables in question), as we experimentally verify in Appendix Sec. \ref{sec:auto}. In contrast, the method proposed in this paper only requires a single run of the DNN model. Besides that, our method also differs from both AVICI and CSIvA in terms of the usage of pairwise embedding vectors.

\section{Limitations of the Node-Edge \textcolor{black}{Architecture}} \label{sec:met:lim}

% \subsection{Motivation} 

% \subsubsection{Case Study of Limitation of Bernoulli-sampling adjacency matrix approach}

%\paragraph{Limitation of the Node-Edge approach.} 
\textcolor{black}{The Node-Edge architecture is common and has been adopted to generate the output DAG $G$ in the literature \citep{lorchamortized,Zhu2020Causal,dpdag,varamballydiscovering}.}
% In previous work, the Node-Edge \textcolor{black}{architecture} is adopted to generate the output DAG $G$ \citep{lorchamortized}.
In this \textcolor{black}{architecture}, each entry $G_{ij}$ in the DAG is independently sampled from $A_{ij}$, an entry in the adjacency matrix $A$. This entry $A_{ij}$ represents the probability that $i$ directly causes $j$. 
We introduce a simple yet effective example setting with only three variables $X$, $Y$, and $T$ to reveal its limitation.

\textcolor{black}{Considering a simulator that generates DAGs with equal probability from two causal models: }In model 1, the causal graph is $G_1: X \rightarrow T \rightarrow Y$, and the variables follow $X \sim \mathcal{N} (0, 1)$, $T = X + \mathcal{N}(0, 1)$, $Y = T + \mathcal{N}(0, 1)$.
In model 2, the causal graph is $G_2: X \leftarrow T \leftarrow Y$, and the variables follow $ Y = \mathcal{N}(0, 3)$, $T = \frac{2}{3}Y + \mathcal{N}(0, \frac{2}{3})$, $X = 0.5T + \mathcal{N}(0, 0.5)$.
In this case, data samples coming from both causal models follow the same joint distribution, which makes $G_1$ and $G_2$ inherently indistinguishable (from observational data sample).

More importantly, when the fully-directed causal DAGs are used as the learning target (as the Node-Edge approach does), an optimally trained neural network will predict $0.5$ probabilities on the directions of the two edges $X - T$ and $T - Y$.
As a result, with $0.25$ probability the graph sampling outcome would be $X \rightarrow T \leftarrow Y$ (see Fig. \ref{fig:ps} in the Appendix).
%It is incompatible with the observational data, resulting in a contradictory causal structure.
This error probability is rooted from the fact that the Bernoulli sampling of the edge $X \rightarrow T$ is not conditioned on the sampling result of the edge $T \leftarrow Y$. Consequently, it is a bias that cannot be avoided even if the DNN has perfectly modeled the \emph{marginal probability} of each edge (marginalized over other edges) given input data.

\color{black}
We further find that $0.25$ is not the worst-case error rate yet. 
Formally, for a distribution $Q$ over a set of graphs, we define the graph distribution where the edges are independently sampled from the marginal distribution as $M(Q)$, i.e., for any causal edges $e_1$ and $e_2$, $P_{G\sim Q}(e_1 \in G) = P_{G\sim M(Q)}(e_1 \in G) = P_{G\sim M(Q)}(e_1 \in G | e_2 \in G)$.
In general, a Node-Edge model optimally trained on data samples $D$ coming from the distribution $Q$ will essentially learn to predict $M(Q)$ (when given the same data samples $D$ at test time).
The following proposition shows that for causal graphs with star-shaped skeleton, with a chance of $26.42\%$ the graph sampled from the marginal distribution $M(Q)$ would be incorrect.

\begin{Proposition}
Let $\mathcal{G}_n$ be the set of graphs with $n+1$ nodes where there is a central node $y$ such that (1) every other node is connected to $y$, (2) there is no edge between the other nodes, \textcolor{black}{and} (3) there is at most one edge pointing to $y$. 
We have 
\begin{align}
\sup_n \max_{Q} P_{G \sim M(Q)}(G \nin \mathcal{G}_n) = 
1 - \frac{2}{e} \approx 0.2642.
\end{align}
\label{prop:star}
\end{Proposition}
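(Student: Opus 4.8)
The plan is to reduce the statement to an extremal problem about the tail of a Poisson-binomial random variable. Since every graph in $\mathcal{G}_n$ has the same star skeleton and no leaf–leaf edges, under any $Q$ supported on $\mathcal{G}_n$ the only randomness lies in the orientation of each of the $n$ spokes. Writing $p_i = P_{G\sim Q}(\text{spoke } i \text{ points into } y)$, the definition of $M(Q)$ orients the spokes independently, so $G\sim M(Q)$ falls outside $\mathcal{G}_n$ exactly when two or more spokes point into $y$ (the only way to violate condition (3); conditions (1)–(2) hold almost surely). Hence, with $S=\sum_{i=1}^n B_i$ and $B_i\sim\mathrm{Bernoulli}(p_i)$ independent, $P_{G\sim M(Q)}(G\notin\mathcal{G}_n)=P(S\ge2)$. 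Moreover, condition (3) forces $\sum_i \mathbf{1}[\text{spoke }i\to y]\le1$ under $Q$, so taking expectations yields the budget $\sum_i p_i\le1$, and conversely every such $(p_i)$ is realized by some $Q$ on $\mathcal{G}_n$. The problem thus becomes $\sup_n \max\{\,P(S\ge2): p_i\ge0,\ \sum_i p_i\le1\,\}$.

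Next I would eliminate the inequality constraint. A short computation gives $\partial_{p_k}P(S\ge2)=P(T_{-k}=1)\ge0$ with $T_{-k}=\sum_{i\ne k}B_i$, so $P(S\ge2)$ is nondecreasing in each coordinate and it is optimal to saturate the budget, $\sum_i p_i=1$. The core step is then to show that, for each fixed $n$, the maximum over the simplex $\{\sum_i p_i=1\}$ is attained at the uniform point $p_i=1/n$. The natural tool is a symmetrization (majorization) argument based on the exchange identity $(p_k-p_l)(\partial_{p_k}-\partial_{p_l})P(S\ge2)=(p_k-p_l)^2(P(W=1)-P(W=0))$, where $W=\sum_{i\ne k,l}B_i$, which shows that balancing two coordinates increases the error whenever $P(W=1)\le P(W=0)$. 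Granting uniform optimality, the maximal error for fixed $n$ equals $f(n):=1-\left(1-\frac{1}{n}\right)^{n-1}\left(2-\frac{1}{n}\right)$.

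Finally I would take the supremum over $n$. Since $\left(1-\frac{1}{n}\right)^{n-1}\to e^{-1}$ and $2-\frac{1}{n}\to2$, we get $f(n)\to 1-\frac{2}{e}$; verifying that $f$ is increasing (equivalently that $\left(1-\frac{1}{n}\right)^{n-1}\left(2-\frac{1}{n}\right)$ decreases in $n$, a routine calculus estimate) shows the value is a supremum approached but never attained at finite $n$, giving $\sup_n\max_Q P_{G\sim M(Q)}(G\notin\mathcal{G}_n)=1-\frac{2}{e}\approx0.2642$. The achievability direction is exactly the uniform sequence $p_i=1/n$, for which $S$ converges to $\mathrm{Poisson}(1)$ and $P(S\ge2)\to 1-2e^{-1}$.

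The main obstacle is the per-$n$ optimality of the uniform assignment. The exchange identity only gives local Schur-concavity: when one coordinate $p_i$ is close to $1$ one can have $P(W=1)>P(W=0)$, so $P(S\ge2)$ is not Schur-concave on the whole simplex and a pure majorization argument does not close immediately. I expect to handle this by treating configurations with a dominant coordinate separately — if some $p_k$ is large then $\sum_{i\ne k}p_i$ is small, and the near-deterministic spoke $k$ suppresses $P(S\ge2)$ far below $1-\frac{2}{e}$ — so that any maximizer must lie in the region where the local argument applies and forces uniformity.
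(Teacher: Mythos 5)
Your overall route is essentially the paper's: the paper also parametrizes $Q$ by the spoke-orientation marginals (writing $Q_i = 1-p_i$), turns condition (3) into the budget constraint ($\sum_i Q_i \ge n-1$, i.e.\ your $\sum_i p_i \le 1$), identifies the error with the event that two or more spokes point into $y$, locates the per-$n$ optimum at the uniform point, and lets $n\to\infty$; your closed form $1-(1-\frac{1}{n})^{n-1}(2-\frac{1}{n})$ is the same as the paper's $1-\frac{2n-1}{n-1}(1-\frac{1}{n})^n$. The only genuine divergence is the optimization step: the paper runs a Lagrange/KKT analysis on $\log P(S\le 1)$ and checks boundary configurations, while you propose monotonicity (your derivative identity $\partial_{p_k}P(S\ge 2)=P(T_{-k}=1)$ is correct) plus an exchange/majorization argument. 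That route is arguably more transparent than the paper's Lagrangian computation, but as written it has a real hole at exactly the step you flag.

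Concretely: your two-region plan does not close the per-$n$ uniform-optimality claim. First, ``a dominant coordinate suppresses the error far below $1-\frac{2}{e}$'' is quantitatively false for moderately dominant coordinates: with $p_k=\frac{1}{2}$ and the remaining budget spread thinly, $P(S\ge 2)\to \frac{1}{2}\left(1-e^{-1/2}\right)+\frac{1}{2}\left(1-\frac{3}{2}e^{-1/2}\right)\approx 0.242$, within $0.023$ of the target, so no crude bound kills a region like $\{p_k\ge \frac{1}{2}\}$ and you would have to redo the optimization there anyway. Second, the complementary region is not one where your local argument applies to all pairs: at $p=(0.45,\,0.45,\,0.05,\,0.05)$, exchanging the two small coordinates has $\sum_{i\ne k,l}\frac{p_i}{1-p_i}\approx 1.64>1$, so that balancing move is locally harmful even though no coordinate is dominant; what you must show is that a maximizer admits \emph{some} improving exchange, not that all exchanges improve. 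The clean repair is to always exchange the \emph{maximal} coordinate: taking $p_k=\max_i p_i$ and any $l\ne k$, every remaining $p_i\le p_k$ gives $\sum_{i\ne k,l}\frac{p_i}{1-p_i}\le \frac{1-p_k-p_l}{1-p_k}=1-\frac{p_l}{1-p_k}<1$ whenever $p_l>0$, so by your own exchange identity any non-uniform point with full support is strictly improvable; points with zero coordinates reduce to comparing uniform values across support sizes $m$, where $1-(1-\frac{1}{m})^{m-1}(2-\frac{1}{m})$ is increasing in $m$. With that single modification your argument is complete, modulo the monotonicity of $f(n)$ needed to identify the supremum with the limit, which you assert as routine (and which, for what it is worth, the paper also states without explicit proof).
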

The proof is provided in Appendix Sec. \ref{sec:mgcs}. 
% It indicates that a Node-Edge model will output $M(Q)$ when optimally trained on a data sample associated with a distribution $Q$.
It indicates that an edge-predicting neural network could suffer from an inevitable error rate of $0.2642$ even if it is perfectly trained.
\color{black}
%\textbf{Remark:} We clarify that our critique is specifically aimed at the limitations of the Node-Edge approach, not the use of an adjacency matrix as a learning target.
%\textcolor{black}{When the final prediction is up to an MEC rather than a fully identifiable DAG, the Bernoulli-sampling adjacency matrix approach results in inconsistency for UTs formed by non-identifiable edges.
%For instance, }our case study shows that the entries in the adjacency matrix are not independent in determining the causal relations, thus the use of independent Bernoulli sampling over the adjacency matrix falls short of adequately representing causal relations. 
In contrast, models that predict skeleton and v-structures would have a theoretical asymptotic guarantee of the consistency under canonical assumption. 
The details, proof and relevant discussions are provided in Appendix Sec. \ref{sec:app:tg}.

\section{The SiCL Method} \label{sec:algo}
In light of the limitations as discussed, we propose a new DNN-based SCL method in this section, named \textbf{SiCL} (\textbf{S}upervised \textbf{i}dentifiable \textbf{C}ausal \textbf{L}earning).

\subsection{Overall Workflow}
\begin{figure*}[htb]
\centering
\includegraphics[width=\linewidth]{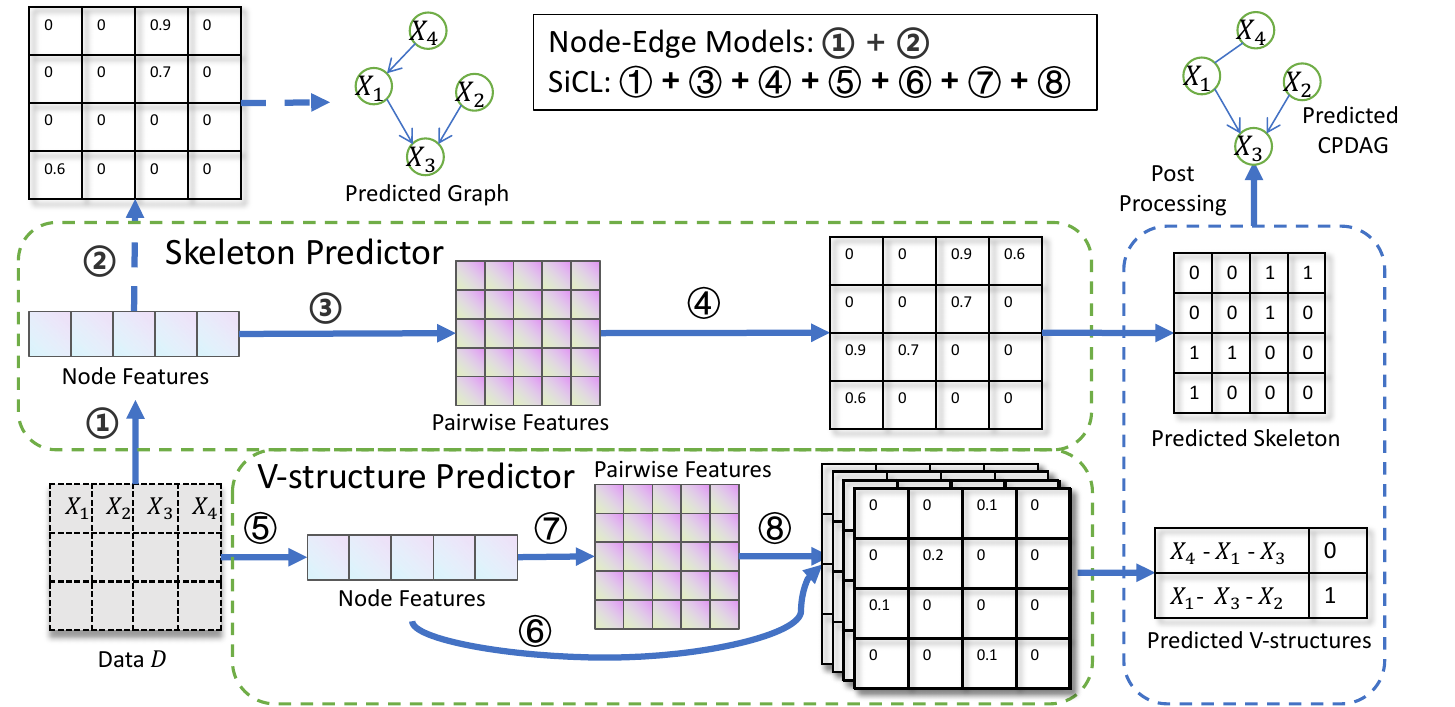}
      % \vspace{-0.05in}
    \caption{The inference workflow of SiCL.}
    \label{fig:ww}
      % \vspace{-0.2in}
      
\end{figure*}
Following the standard DNN-based SCL paradigm, the core inference process is implemented as a DNN. The DNN takes a data sample encoded by a matrix $D\in\mathbb{R}^{n \times d}$ as input, with $d$ being the number of observable variables and $n$ being the number of observations. In contrast to previous \textcolor{black}{Node-Edge} approaches, the SiCL method does not use the DNN to directly predict the causal graph, but instead seeks to predict the skeleton and the v-structures of the causal graph, which amount to the \emph{MEC-identifiable} causal structures as mentioned previously. 

Specifically, our DNN outputs two objects: (1) a skeleton prediction matrix $S \in [0,1]^{d \times d}$ where $S_{ij}$ models the conditional probability  
\textcolor{black}{of the existence of the \emph{undirected} edge $X_i - X_j$, conditioned on the input data sample $D$, }
% $P(undirected~edge~(X_i,X_j)~exists~|~D)$, 
and (2) a v-structure prediction tensor $V \in [0,1]^{d\times d\times d}$ where $V_{ijk}$ models the conditional probability 
% $P(v\text{-}structure~(X_j \rightarrow X_i \leftarrow X_k)~exists~|~D)$. 
\textcolor{black}{of the existence of the v-structure component $X_j \rightarrow X_i \leftarrow X_k$, again conditioned on $D$.}
In our implementation, $S$ and $V$ are generated by two separate sub-networks, called Skeleton Predictor Network (SPN) and V-structure Predictor Network (VPN), respectively. 
%Both the SPN and the VPN utilize a Pairwise Encoder architecture to explicitly model features about node-pairs. 
To further address the limitation of only having node-wise features for Node-Edge models, we propose to equip SPN and VPN with Pairwise Encoder modules to explicitly capture node-pair-wise features.
% Due to the significance of the pairwise relationship \textcolor{black}{between vertices}, we propose a pairwise encoder module to model the pairwise representations.

Based on the skeleton prediction matrix $S$ and v-structure prediction tensor $V$, we infer the skeleton and v-structures of the causal graph; from the two we can determine a unique CPDAG. The CPDAG encodes a Markov equivalence class, from which we can pick up a graph instance as the prediction of the causal DAG (if needed). 
% See Fig. \ref{fig:ww} for a diagram of the overall inference workflow of SiCL.
\textcolor{black}{Figure \ref{fig:ww} provides a diagram of the overall inference workflow of SiCL, and a pseudo-code of the workflow is given by Algorithm \ref{alg:workflow} in Appendix.}

Parameters of the SPN and VPN are trained following the standard supervised learning procedure. In our implementation, we only use synthetic training data, which is relatively easy to obtain, and yet could often lead to strong performance on real-world workloads~\citep{ke2023learning}.

In the following, we elaborate the DNN architecture, the learning targets, as well as the post-processing procedure used in the SiCL method.

\subsection{Feature Extraction} \label{sec:fem}
% \subsection{Pairwise Encoder Module} \label{sec:pem}

\textbf{Input Processing and Node Feature Encoder.} Given input data sample, which is a matrix $D\in\mathbb{R}^{n \times d}$, the input processing module contains a linear layer for continuous input data or an embedding layer for discrete input data, yielding the raw node features $\mathcal{F}^{raw}_{il}$ for each node $i$ in each observation $l$.
% It consists of an input processing module, a node feature encoder module, and a pairwise encoder module sequentially.
% The input processing module contains a linear layer for continuous input data or an embedding layer for discrete input data.
After that, we employ a node feature encoder to further process the raw node features into the final node features $\mathcal{F}_{il}$.
Similar to previous papers \citep{ke2023learning,lorchamortized}, the node feature encoder is a transformer-like network comprising attention layers over the observation dimension and the node dimension alternately, which naturally maintains permutation equivalence across both variable and data dimension \textcolor{black}{because of the intrinsic symmetry of attention operations}.
\textcolor{black}{More details about the node feature encoder are presented in Appendix Sec. \ref{sec:dnf} due to page limit}.

\textbf{Pairwise Encoder.} Given node features $\mathcal{F} \in \mathbb{R}^{d\times n \times h}$ for all the $d$ nodes, the goal of pairwise encoder is to encode their pairwise relationships by $d^2$ pairwise features, represented as a tensor $\mathcal{P} \in \mathbb{R}^{d\times d \times n \times h}$, where $\mathcal{P}_{ijl} \in \mathbb{R}^{h}$ is a pairwise feature corresponding to the node pair $(i,j)$ in observation $l$.
% $$\mathbf{p}_{ij} \in \mathbb{R}^{h}$.}
As argued in Sec. \ref{sec:int}, both ``internal'' information (i.e., the pairwise relationship) and ``external information (e.g., the context of the conditional separation set) of node pairs are needed to capture persistent dependency and orientation asymmetry.
Our pairwise encoder module \textcolor{black}{is designed to model the internal relationship via node feature concatenation and the non-linear mapping by MLP.}
On the other hand, 
we employ attention operations within the pairwise encoder to capture the contextual relationships (including persistent dependency and orientation asymmetry).%, which is a common practice.

More specifically, the pairwise encoder module consists of the following parts (see Appendix Fig. \ref{fig:pem} for diagrammatic illustration):
%context-based
% Given a set of $d$ nodes, $h$-dimensional node features, the goal of the pairwise encoder $PE$ is to encapsulate their pairwise relationships by $d^2$ $h$-dimensional pairwise features.
% \st{Given a set of $d$ nodes, each node is represented by an $h$ dimensional vector.
% The goal of the pairwise encoder is to encapsulate their pairwise relationships by $d^2$ $h$-dimensional pairwise features.}
\begin{enumerate}[leftmargin=*]
    \item \textit{Pairwise Feature Initialization.} The initial step is to concatenate the node features \textcolor{black}{from the previous node feature encoder module} for every pair of nodes.
    Subsequently, we employ a three-layer MLP to convert each concatenated vector $\mathcal{P}_{ijl} \in \mathbb{R}^{2h}$ to an $h$-dimensional raw pairwise feature, i.e., $\mathcal{P}_{ijl}^1 =  \mathrm{MLP}([\mathcal{F}_{il}; \mathcal{F}_{jl}])$. It is designed to capture the intricate relations that exist inside the pairs of nodes.
    \item \textit{Unidirectional Multi-Head Attention.} In order to model the external information, we employ an attention mechanism where the query is composed of the aforementioned $d^2$ $h$-dimensional raw pairwise features, while the keys and values consist of $h$-dimensional features of $d$ individual nodes, i.e., $\mathcal{P}^2 = \mathrm{MultiHeadAttention}(\mathcal{P}^1, \mathcal{F}, \mathcal{F})$.
Note that, this attention operation is unidirectional, which means we only calculate cross attention from raw pairwise features $\mathcal{P}^1$ to node features $F$.
This design is meant to capture \textcolor{black}{both pair-wise and node-wise information (as both are critical to model the causality, as discussed in Sec. \ref{sec:int}) while at the same time to maintain a reasonable computational cost.}
\item \textit{Final Processing.} Following the widely-adopted transformer architecture, we incorporate a residual structure and a dropout layer after the previous part, i.e., $\mathcal{P}^3 = \mathrm{Norm}(\mathcal{P}^1 + \mathcal{P}^2)$.
Finally, we introduce a three-layer MLP to further capture intricate patterns and non-linear relationships between the input embeddings, as well as to more effectively process the information from the attention mechanism: $\mathcal{P} = \mathrm{Norm}(\mathrm{MLP}(\mathcal{P}^3) + \mathcal{P}^3)$. 
% This approach allows for a comprehensive understanding of the complex relationships and leading to more robust and accurate modeling of causal structures.
% For the given dataset matrix $D \in \mathbb{R}^ {n\times d}$ with $n$ observations, the pairwise encoder module yields a tensor of shape $\mathbb{R}^{n \times d \times d \times h}$, containing an $h$-dimension feature vector for each node pair of each observation. 
It yields the final pairwise feature tensor $\mathcal{P} \in \mathbb{R}^{d \times d \times n \times h}$.
% , giving an $h$-dimension feature vector for each node pair of each observation. 
\end{enumerate}

% Now we introduce the whole architecture of feature extractor.
% It consists of an input processing module, a node feature encoder module, and a pairwise encoder module sequentially.
% The input processing module contains a linear layer for continuous input data or an embedding layer for discrete input data.
% Similar to previous papers \citep{ke2023learning,lorchamortized}, the node feature encoder is a transformer-like network comprising attention layers over either the observation dimension or the node dimension alternately.
% It naturally maintains permutation equivariance across both the variable dimension and the data dimension \textcolor{black}{because of the intrinsic symmetry of attention operations}.
% Subsequently, the pairwise encoder module is applied to obtain the pairwise features $\mathcal{P}$. 

\subsection{Learning Targets} \label{sec:met:lic}

% \subsubsection{Learning Identifiable Causal Structures}
% To address the issue, we propose to allow the network model to learn solely the identifiable causal structures in $G$, i.e., its MEC. 
As mentioned above, our learning target is a combination of the skeleton and the set of v-structures, which together represent an MEC. 
Two separate neural (sub-)networks are trained for these two targets.
% As shown in Sec. \ref{sec:bg}, an MEC can be represented by a combination of the skeleton and the set of v-structures, which are our learning targets.
% As discussed in Sec. \ref{sec:bg}, skeletons are entirely identifiable across all data types, and v-structures are also identifiable for Linear-Gaussian continuous data and discrete data. 
% As discussed in Sec. \ref{sec:bg}, skeletons and v-structures are identifiable under our settings. 
% Consequently, our learning target contains the skeleton and the set of v-structures, forming a representation of the MEC. % under the conditions of Linear-Gaussian continuous and discrete data.%, and only the skeleton under other conditions. 
%It avoids unnecessary prediction errors and contradictions.
%For instance, the two different causal structures share the same MEC in the above example, and our model can predict $X - T - Y$ as the MEC representation.
%learn from unified precise labels of skeleton and v-structures.
% This refined approach mitigates the challenges associated with unidentifiable causal structures and enhances the overall performance of the network model.

\textbf{Skeleton Prediction.} 
As the persistent dependency between pairs of nodes determines the existence of edges in the skeleton, the pairwise features correspond to edges in the skeleton naturally.
Therefore, for the skeleton learning task, we initially employ a max-pooling layer over the observation dimension to obtain a single vector $\mathcal{S}_{ij} \in \mathbb{R}^{h}$ for each pair of nodes, i.e., $\mathcal{S}_{ij} = \max_{k} \mathcal{P}_{ijk}$.
Then, a linear layer and a sigmoid function are applied to map the pairwise features to the final prediction of edges, i.e., $S_{ij} = \mathrm{Sigmoid}(\mathrm{Linear}(\mathcal{S}_{ij}))$.
Our learning label, the undirected graph representing the skeleton, can be easily calculated by summing the adjacency of the DAG $G$ and its transpose $G^T$.
% Denoting the combination of max-pooling and linear layer as a skeleton prediction module $SP$, 
Therefore, our learning target for the skeleton prediction task can be formulated as $\min \mathcal{L}(S, G + G^T)$,
where $\mathcal{L}$ is the popularly used binary cross-entropy loss function.

% $FE$ is the feature extractor mentioned above, and $D$ denotes the input data.

\textbf{V-structure Prediction.}
% he orientation asymmetry is significant to judge the existence of v-structures.
A UT $\langle X_i, X_k, X_j \rangle$ is a v-structure when $\exists \mathbf{S}$, such that $X_k \notin \mathbf{S}$ and $X_i \perp X_j | \mathbf{S}$.
Motivated by this, we concatenate the corresponding pairwise features of the pair $\langle X_i, X_j \rangle$ with the node features of $X_k$ as the feature for each UT $\langle X_i, X_k, X_j \rangle$ after a max-pooling along the observation dimension, i.e., $\mathcal{U}_{kij} = [\max_l \mathcal{P}_{ijl};\max_l \mathcal{F}_{kl}]$.
After that, we use a three-layer MLP with a sigmoid function to predict the existence of v-structures among all UTs, i.e., $\mathcal{U}_{kij} = \mathrm{Sigmoid}(\mathrm{MLP}( \mathcal{U}_{kij}))$.
Given a data sample of $d$ nodes, it outputs a third-order tensor of shape $\mathbb{R}^{d \times d \times d}$, namely v-tensor, corresponding to the predictions of the existence of v-structures.
The v-tensor label can be obtained by $\mathcal{V}_{kij} = G_{ik} G_{jk} (1 - G_{ij})(1 - G_{ji})$,
where $\mathcal{V}_{kij}$ indicates the existence of v-structure $X_i \rightarrow X_k \leftarrow X_j$.
Therefore, the learning target for the v-structure prediction task can be formulated as $\min \mathcal{L}_{UT}(\mathcal{U}, \mathcal{V})$,
where $\mathcal{L}_{UT}$ is the binary cross-entropy loss masked by UTs, i.e., we only calculate such loss on the valid UTs.
In our current implementation, the parameters of the feature encoders are fine-tuned from the skeleton prediction task, as the UTs to be classified are obtained from the predicted skeleton and the skeleton prediction can be seen as a general pre-trained task.

Note that neural networks with our learning targets have a theoretical guarantee for correctness in asymptotic sense, as mentioned around the end of Sec. \ref{sec:met:lim}. 

\subsection{Post-Processing} 
\textcolor{black}{Although our method theoretically guarantees asymptotic correctness, conflicts in predicted v-structures might occasionally occur in practice. Therefore,} in the post-processing stage, we apply a straightforward heuristic to resolve the potential conflicts and cycles among predicted v-structures following previous work \citep{dai2023ml4c}.
\textcolor{black}{After that, we use an improved version of Meek rules \citep{meek1995causal,tsagris2019bayesian} to obtain other MEC-identifiable edges without introducing extra cycles.}
Combining the skeleton from the skeleton predictor model with all MEC-identifiable edge directions, we get the CPDAG predictions.

We provide a more detailed description of the post-processing process in Appendix Sec. \ref{app:post}. It is worth noting that our current design of post-processing is a very conservative one, and this module is also non-essential in our whole framework; see Appendix Sec. \ref{app:post} for more discussions and evidences.
\color{black}

\section{Experiments} \label{sec:exp}

\begin{table*}[!tb]
\centering
% \resizebox{\linewidth}{!}{%
\begin{threeparttable}
\caption{\textbf{General comparison of SiCL and other methods}. The average performance results in three runs are reported for SiCL. GES takes more than 24 hours per graph on WS-L-G. SLdicso is unsuitable on non-linear-Gaussian data. \textcolor{black}{Full results on all metrics are provided in Appendix Tab. \ref{tab:epder}}.}
\label{tab:epders}
\begin{tabular}{cccccccccccc}
\toprule
 \multirow{2}{*}{Method} & \multicolumn{2}{c}{WS-L-G}& \multicolumn{2}{c}{SBM-L-G}& \multicolumn{2}{c}{WS-RFF-G}& \multicolumn{2}{c}{SBM-RFF-G}& \multicolumn{2}{c}{ER-CPT-MC} \\
 & s-F1$\uparrow$ & o-F1$\uparrow$ &s-F1$\uparrow$ & o-F1$\uparrow$&s-F1$\uparrow$ & o-F1$\uparrow$&s-F1$\uparrow$ & o-F1$\uparrow$&s-F1$\uparrow$ & o-F1$\uparrow$\\
\midrule
 PC & $30.4 $ & $16.0 $& $58.8$&$35.9$&$36.1$&$16.1$&$57.5$&$34.2$&$82.2$&$40.6$ \\
 GES & * & * & $70.8$& $55.0$&$41.7$&$23.6$&$56.5$&$38.0$&$82.1$&$42.4$\\
 NOTEARS & $33.3 $ & $31.5$&$80.1$&$77.8$&$37.7$&$33.4$&$55.6$&$48.5$&$16.7$&$0.6$ \\
 DAG-GNN & $35.5$ & $32.7$ &$66.2$&$62.5$&$33.2$&$28.9$&$47.1$&$40.6$&$24.8$&$3.7$\\
 GRAN-DAG & $16.6$&$11.7$&$22.6$&$14.4$&$4.7$&$1.1$&$17.4$&$3.8$&$40.8$&$7.3$ \\
 % NOTEARS-MLP &$24.6$&$11.8$&$44.7$&$39.0$&$\mathbf{52.7}$&$\mathbf{47.7}$& $48.2$ &$43.5$& *& * & \\
 GOLEM &$30.0$ &$19.3$&$68.5$&$65.2$&$27.6$&$17.7$&$41.1$&$24.8$&$37.6$&$9.3$& \\
 % GRaSP &&&&&&&&&&$0.0$&$0.0$ \\
 SLdisco &$0.1$ &$0.1$&$1.9$&$1.2$&*&*&*&*&*&* \\
 AVICI & $39.9 $ & $35.8$ & $84.3$ & $81.6$& $47.7$& $45.2$& $76.6$& $72.7$& $76.9$& $57.6$\\
 SiCL & $\mathbf{44.7} $ & $\mathbf{38.5} $& $\mathbf{85.8} $ & $\mathbf{82.7} $ & $\mathbf{51.8}  $ & $ \mathbf{46.3}$ & $ {\mathbf{82.1}}$ & $\mathbf{78.0}$ &$\mathbf{84.2}$ & $\mathbf{59.9}$ \\
\bottomrule
\end{tabular}
\end{threeparttable}
% }
% \vspace{-0.15in}
\end{table*}
In this section, we report the performance of SiCL on both synthetic and real-world benchmarks, followed by an ablation study. 
More results and discussions about \textcolor{black}{time cost}, generality, and acyclicity are deferred to Appendix Sec. \ref{sec:app:exp:e}, due to page limit.

\subsection{Experiment Design}
\textbf{Metrics.} We profile a causal discovery method's performance using the following two tasks:

\emph{Skeleton Prediction}: 
Given a data sample $D$ of $d$ variables, for each variable pair, we want to infer if there exists direct causation between them. The standard metric \textbf{s-F1} (short for \textbf{skeleton-F1}) is used, which considers skeleton prediction as a binary classification task over the $d(d-1)/2$ variable pairs. 
For completion, we also report classification accuracy results. 
For methods with probabilistic outputs, AUC and AUPRC scores are also measured.

% \emph{Graph Prediction} :
% Given a data sample $D$ of $n$ variables, for each variable pair we want to infer if there exists direct causation, and in that case we want to further infer the causal direction. Following ML4C \cite{dai2023ml4c}, we use the following \textbf{orientation-F1} metric for this structured prediction task: A variable pair is considered a positive item if there is direct causation between them, and a prediction about the pair is a positive prediction if at least one causal direction is predicted. A positive prediction is a true positive if it's over a positive item \emph{and the predicted causal direction is correct}. The standard F1 calculation is then applied.

% Another metric, called \textbf{edge-F1} in this paper, was used in some previous works \textcolor{red}{[which?]}, which considers the task as a binary classification problem over the $n^2$ \emph{ordered-pairs} (whether there is a directed edge in the groundtruth causal graph or not). We also measured edge-F1 for completion.

\emph{CPDAG Prediction}: 
% Given a data sample $D$ of $d$ variables, for each variable pair, we want to infer if there exists direct causation between them, in that case we want to infer if the causal direction is identifiable, and in that case we try to infer the causal direction. 
\textcolor{black}{Given a data sample $D$ of $d$ variables, for each pair of variables, we aim to determine if there is direct causality between them. If so, we then assess whether the causal direction is MEC-identifiable, and if it is, we attempt to infer the specific causal direction.}
As this task involves both directed and undirected edge prediction, we use \textbf{SHD} (Structural Hamming Distance) to measure the difference between the true CPDAG and the inferred CPDAG. Besides that, we also measure the \textbf{o-F1} (short for \textbf{orientation-F1}) of the directed sub-graph of the inferred CPDAG (compared against the directed sub-graph of the true CPDAG), which focuses on capturing the inference method's orientation capability in identifying \textit{MEC-identifiable} causal edges. 
% Finally, we also measure \textbf{v-F1}, which is F1 score with the set of v-structures in the true CPDAG as positive items (among all ordered-triples), and v-structures in the inferred CPDAG as positive predictions.
\textcolor{black}{Finally, we calculate the \textbf{v-F1} score, where the F1 score is based on the set of v-structures in the true CPDAG as the positive instances (from all ordered triples), and the v-structures in the inferred CPDAG as the positive predictions.}

\textbf{Testing Data.} 
A testing instance consists of a groundtruth causal graph $G$, the structural equations $f_i$ and noise variables $\epsilon_i$ for each variable $X_i$, and an i.i.d. sample $D$. We use two categories of testing instances in our experiments:

\textit{Analytical Instances}: 
where $G$ is sampled from a DAG distribution $\mathcal{G}$, and $\{f_i,\epsilon_i\}$ sampled from a structural-equation distribution $\mathcal{F}$ and a noise meta-distribution $\mathcal{N}$. 
%It is called an analytical instance because 
%$\mathcal{G}$, $\mathcal{F}$, $\mathcal{N}$ all have analytical forms.
We consider three random graph distributions for $\mathcal{G}$: Watts-Strogatz (WS), Stochastic Block Model (SBM), Erdos-Rényi (ER); and three $\mathcal{F}$'s: random linear (L), Random Fourier Features (RFF), and conditional probability table (CPT). $\mathcal{N}$ is a uniform distribution over Gaussian's for continuous data, and a Dirichlet distribution over Multinomial Categorical distributions for discrete data. 
% We examined five combinations of these variations: \textbf{WS-L-G}, \textbf{SBM-L-G}, \textbf{WS-RFF-G}, \textbf{SBM-RFF-G}, \textbf{ER-CPT-MC}.
\textcolor{black}{We examine five combinations of testing instances: \textbf{WS-L-G}, \textbf{SBM-L-G}, \textbf{WS-RFF-G}, \textbf{SBM-RFF-G}, and \textbf{ER-CPT-MC}.}

\textit{Real-world Instance}:
% The classic dataset \textbf{Sachs} is used. 
% It consists of a data sample recording the concentration levels of 11 phosphorylated proteins in 853 human immune system cells, and of a causal graph over these 11 variables identified by \cite{sachs2005causal} based on expert consensus and experimental biology literature. 
% %\textcolor{red}{[is the concentration levels discretized?]}
The classic dataset \textbf{Sachs} is used \textcolor{black}{to evaluate performance in real-world scenarios}. 
It consists of a data sample recording the concentration levels of 11 phosphorylated proteins in 853 human immune system cells, and of a causal graph over these 11 variables identified by \cite{sachs2005causal} based on expert consensus and biology literature.

\textbf{Algorithms.}
As baselines, we compare with a series of representative unsupervised methods, including \textbf{PC} (using the recent Parallel-PC variation by ~\citet{le2016fast}), 
\textbf{GES}~\citep{chickering2002optimal}, 
\textbf{NOTEARS}~\citep{zheng2018dags}, 
% \textbf{NOTEARS-MLP}~\citep{zheng2018dags},
\textbf{GOLEM}~\citep{ng2020role}, 
\textbf{DAG-GNN}~\citep{yu2019dag},
\textbf{GRANDAG}~\citep{Lachapelle2020Gradient-Based}, \textbf{SLdisco} \citep{petersen2023causal} as well as \textbf{AVICI}~\citep{lorchamortized}, a DNN-based SCL method regarded as current state-of-the-art method.

For our method, besides the full \textbf{SiCL} implementation as described by Sec. \ref{sec:algo}, 
we also implement 
\textbf{SiCL-Node-Edge}, which predicts the causal graph using the node features and can be regarded as equivalent to AVICI, and 
\textbf{SiCL-no-PF}, which skips pairwise feature extraction and predicts the skeleton and v-tensor using node-wise features (see Appendix Fig. \ref{fig:abl}). 
% It is noteworthy that SiCL contains 6 layers on the node feature encoder module while SiCL-no-PF and AVICI contains 8 layers to eliminate any potential bias arising from differences in model size. 
Notably, SiCL contains 2.8M parameters, while SiCL-Node-Edge and SiCL-no-PF contain 3.2M parameters, because SiCL contains fewer layers on the node feature encoder to eliminate potential bias from size difference. 

For DNN-based SCL methods, the DNNs are trained with synthetic data where the causal graphs follow the Erdos-Rényi (ER) and Scale-Free (SF) models and the structural equations and noise variables follow the same distribution type as the corresponding testing data. 
Therefore, the disparities between the causal graph distribution at training and testing time help to examine the generality of SiCL in \textbf{OOD} settings to some extent.
% Our evaluation part is mostly about OOD setting, i.e., the distribution of test set is OOD w.r.t. the distribution of training set.
%For interface mismatches (e.g. CPDAG prediction with a graph prediction algorithm, or discrete-data experiment on an algorithm that originally only accepts continuous data), straightforward adaptations are applied. 
% See Appendix Sec. \ref{sec:app:exp:set} for more details in the experimental setting. 
More details of the experimental setting are presented in Appendix Sec. \ref{sec:app:exp:set}.

\subsection{Results on Synthetic Dataset} \label{sec:exp:gp}

We conduct a comprehensive comparison of SiCL with various baselines in both skeleton prediction and CPDAG prediction tasks.
The main results of metrics skeleton-F1 and orientation-F1 are presented in Tab. \ref{tab:epders}, and results on full metrics are provided in Appendix Tab. \ref{tab:epder}.
% We perform experiments in Tab. \ref{tab:epder} and Appendix Tab. \ref{tab:mc}-\ref{tab:mc2} on both continuous datasets and discrete datasets to evaluate the performance of the competing methods. 
On continuous data, DNN-based SCL methods (i.e., AVICI and SiCL) demonstrate consistent and obvious advantages over traditional approaches.
SiCL consistently outperforms the other methods on both skeleton prediction task and CPDAG prediction task.
On the other hand, some unsupervised methods achieve comparable performance among DNN-based SCL methods on the discrete data ER-CPT-MC. 
Nonetheless, our proposed SiCL emerges as the top performer, further substantiating its superiority in addressing the causal learning problem.

\subsection{Results on Real-world Dataset} \label{sec:exp:erd}
\begin{table}
\centering
% \vspace{-0.45in}
\caption{Comparison on Sachs dataset.}
% \vspace{-0.05in}
\label{tab:sachs}
    \resizebox{\linewidth}{!}{%
    \begin{threeparttable}
\begin{tabular}{@{}ccccc@{}}
\toprule
 % & \multicolumn{2}{c}{CPDAG metrics} & \multicolumn{2}{c}{Skeleton Metrics} \\
 \multirow{2}{*}{Method} & \multicolumn{2}{c}{Skeleton Prediction }& \multicolumn{2}{c}{CPDAG Prediction} \\
 &  s-F1$\uparrow$ & s-Acc.$\uparrow$ & SHD$\downarrow$ & \#v-struc.$\downarrow$\\ \midrule
 PC& $68.6$& $80.0$ &$19$&$12$\\
 GES & $70.6$ & $81.8$ &$19$&$8$\\
 DAG-GNN & $21.1$ & $72.7$&$15$&$\mathbf{0}$ \\
 NOTEARS & $11.1$ & $70.9$ &$16 $ & $ \mathbf{0}$ \\
  GRAN-DAG & $45.5 $ & $78.2 $ &$ 12 $ & $\mathbf{0}$ \\
  GOLEM & $ 36.4$ & $ 74.5$ &$ 14 $ & $\mathbf{0}$ \\
  % CAM & $87.2$ & $90.9$ & $17$ & $6$\\
AVICI & $66.7 $&$ 83.5$  &$18 $&$ 14$\\ 
SiCL & $\mathbf{71.4}$ & $\mathbf{86.8}$&$\mathbf{6}$&$\mathbf{0}$   \\ 
\bottomrule
\end{tabular}
\end{threeparttable}
}
% \vspace{-0.05in}
\end{table}
To assess the practical applicability of SiCL, we conduct a comparison using the real-world dataset Sachs.
The discretized Sachs data obtained from the bnlearn library\footnote{https://www.bnlearn.com/} is used.
The DNN-based SCL methods are trained on random synthetic graphs, making this also an \textbf{OOD} prediction task.
The results are provided in Tab. \ref{tab:sachs}.

For the skeleton prediction task, SiCL performs the best, albeit with a modest gap (generally 1$\sim$3 scores higher than the runners-up). For the CPDAG prediction task, SiCL performs significantly better than all other methods (reducing SHD from 12 to 6, against the second best). Interestingly, the true causal DAG of the Sachs benchmark actually contains no v-structure, so any predicted v-structure is an error. We see that methods competitive with SiCL in skeleton prediction (AVICI, PC, GES) mistakenly predicted a large number of v-structures on the Sachs data, while SiCL correctly predict zero v-structure.

\begin{table}[!tb]
\centering
% \resizebox{\linewidth}{!}{%
\begin{threeparttable}
\caption{Ablation study of SiCL components. Full metrics are available in Appendix Tab. \ref{tab:fcplg}.}
\label{tab:cplg}
% \vspace{-0.05in}
\begin{tabular}{ccccc}
\toprule
 \multirow{2}{*}{Method} & \multicolumn{2}{c}{WS-L-G}& \multicolumn{2}{c}{SBM-L-G} \\
 & s-F1$\uparrow$ & o-F1$\uparrow$ &s-F1$\uparrow$ & o-F1$\uparrow$\\\midrule
 SiCL-Node-Edge & $39.9$ & $35.8$ & $84.3$ & $81.6$ \\
 SiCL-no-PF &$42.4$ & $37.9$ & $85.5$ & $82.2$ \\ 
 SiCL & $\mathbf{44.7}$ & $\mathbf{38.5}$ &$\mathbf{85.8}$& $\mathbf{82.7}$ \\
\bottomrule
\end{tabular}
\end{threeparttable}
% }
% \vspace{-0.2in}
\end{table}

\subsection{\textcolor{black}{Ablation Study}}

\begin{figure}[t]
    \centering
    % \vspace{-0.4in}
    \includegraphics[width=\linewidth]{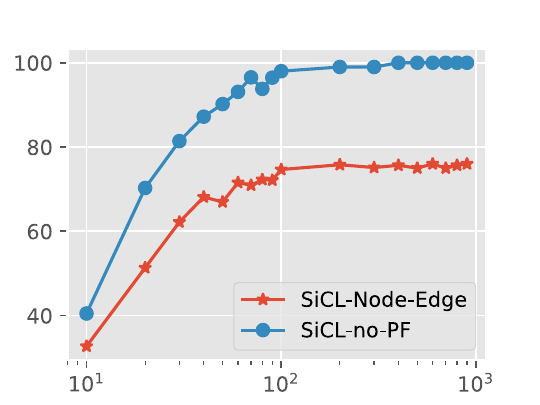}
    \caption{Comparison of SiCL-Node-Edge and SiCL-no-PF in o-F1 trend as observation samples increase on a constructed dataset.}
    \label{fig:cto}
    % \vspace{-0.15in}
\end{figure}
\textbf{Effectiveness of Learning Identifiable Structures.} \label{sec:exp:elis}As discussed in Section \ref{sec:met:lic}, SiCL focuses on learning MEC-identifiable causal structures rather than directly learning the adjacency matrix.
To verify the effectiveness of this idea,
% in practice and eliminate the bias of sampling from specifically designed distribution as much as possible, 
we compare SiCL-Node-Edge with SiCL-no-PF.
% to provide empirical support for the superiority of learning identifiable causal structures.
% evaluate the effectiveness of learning identifiable structures.
% test models on MEC-randomized WS-L-G and SBM-L-G datasets, where the training data is also correspondingly MEC-randomized.
% The comparisons between AVICI and SiCL-no-PF 
These two models share a similar node feature encoder architecture but have different learning targets: SiCL-Node-Edge predicts the adjacency matrix, while the SiCL-no-PF predicts the skeleton and v-tensor.
The results are shown in Table \ref{tab:cplg}.
Consistently, SiCL-no-PF demonstrates higher performance on both skeleton and CPDAG prediction tasks. 
This observation echoes our theoretical conclusion regarding the necessity and benefits of learning identifiable causal structures to improve overall performance.

To further underscore the significance of learning identifiable causal structures (especially in the asymptomatic sense), we conduct a comparative analysis using a specially constructed dataset, which contains six nodes forming an independent v-structure and a UT. 
Figure \ref{fig:cto} illustrates that the orientation F1 scores of CPDAG predictions from SiCL-Node-Edge suffer from an unavoidable error and do not improve with the addition of more observational samples. 
In contrast, predictions from SiCL-no-PF reach perfect accuracy, confirming the value of learning identifiable causal structures.

\textbf{Effectiveness of Pairwise Representation.} To assess the effectiveness of pairwise representation, we compare the full version of SiCL with a variant lacking pairwise features (SiCL-no-PF). As shown in Table \ref{tab:cplg}, the full-version SiCL consistently outperforms SiCL-no-PF in both skeleton prediction and CPDAG prediction tasks. Notably, we have intentionally set the model size of the full-version SiCL (2.8M parameters) to be smaller than that of SiCL-no-PF (3.2M parameters) so as to avoid any potential advantage from increased model complexity brought by the pairwise feature encoder module. The observed performance gains in this case underscore the critical role of pairwise features in identifying causal structures. Additionally, we conduct further comparisons across more diverse settings, with results detailed in Appendix Sec. \ref{sec:mpc}. These results demonstrate even more pronounced improvements in favor of SiCL, reinforcing the importance of pairwise representations in causal discovery.

\section{Conclusion}

We proposed SiCL, a novel DNN-based SCL approach designed to predict the corresponding skeleton and a set of v-structures. We showed that such design do not suffer from the (non-)identifiability limit \textcolor{black}{that exists in current architectures}. Moreover, SiCL is equipped with a pairwise encoder module to explicitly model relationships between node-pairs. 
Experimental results validated the effectiveness of these ideas. %showed that SiCL significantly outperforms other DNN-based SCL approaches. 
% \color{black}
% For limitations and future works, please refer to Appendix \ref{sec:lim}.
% \color{black}

This paper also introduces a few interesting open problems.
The proposed DNN model works in the canonical setting under the classic MEC theory, in which the skeleton and v-structures are the identifiable structure.
It can be an interesting future-work direction to explore how to learn other identifiable causal structure in other assumption settings following the same principle.
Due to the inherent complexity of DNNs, the explanation of the decision mechanism of our model remains an open question. 
Therefore, future work could consider to explore how decisions are made within the networks and provide some insights for traditional methods. 
Moreover, the proposed pairwise encoder modules needs $O(d^3)$ computational complexity, which may restrict its current application to scenarios with huge number of nodes.
Future work could focus on simplifying these operations or exploring features with less complexity (e,g., low rank features) to reduce the overall computational cost.

\bibliography{main}

\appendix
\renewcommand\thesection{A\arabic{section}}
\renewcommand{\thetable}{A\arabic{table}}
\renewcommand{\thefigure}{A\arabic{figure}}
\renewcommand{\thealgorithm}{A\arabic{algorithm}}

%%%%%%%%%%%%%%%%%%%%%%%%%%%%%%%%%%%%%%%%%%%%%%%%%%%%%%%%%%%%
\section*{Checklist}

% %%% BEGIN INSTRUCTIONS %%%
% The checklist follows the references. For each question, choose your answer from the three possible options: Yes, No, Not Applicable.  You are encouraged to include a justification to your answer, either by referencing the appropriate section of your paper or providing a brief inline description (1-2 sentences). 
% Please do not modify the questions.  Note that the Checklist section does not count towards the page limit. Not including the checklist in the first submission won't result in desk rejection, although in such case we will ask you to upload it during the author response period and include it in camera ready (if accepted).

% \textbf{In your paper, please delete this instructions block and only keep the Checklist section heading above along with the questions/answers below.}
% %%% END INSTRUCTIONS %%%

 \begin{enumerate}

 \item For all models and algorithms presented, check if you include:
 \begin{enumerate}
   \item A clear description of the mathematical setting, assumptions, algorithm, and/or model. [Yes] %[Yes/No/Not Applicable]
   \item An analysis of the properties and complexity (time, space, sample size) of any algorithm. [Yes] %[Yes/No/Not Applicable]
   \item (Optional) Anonymized source code, with specification of all dependencies, including external libraries. [Yes] %[Yes/No/Not Applicable]
 \end{enumerate}

 \item For any theoretical claim, check if you include:
 \begin{enumerate}
   \item Statements of the full set of assumptions of all theoretical results. [Yes] %[Yes/No/Not Applicable]
   \item Complete proofs of all theoretical results. [Yes] %[Yes/No/Not Applicable]
   \item Clear explanations of any assumptions. [Yes] %[Yes/No/Not Applicable]
 \end{enumerate}

 \item For all figures and tables that present empirical results, check if you include:
 \begin{enumerate}
   \item The code, data, and instructions needed to reproduce the main experimental results (either in the supplemental material or as a URL). [Yes] %[Yes/No/Not Applicable]
   \item All the training details (e.g., data splits, hyperparameters, how they were chosen). [Yes] %[Yes/No/Not Applicable]
         \item A clear definition of the specific measure or statistics and error bars (e.g., with respect to the random seed after running experiments multiple times). [Yes] %[Yes/No/Not Applicable]
         \item A description of the computing infrastructure used. (e.g., type of GPUs, internal cluster, or cloud provider). [Yes] %[Yes/No/Not Applicable]
 \end{enumerate}

 \item If you are using existing assets (e.g., code, data, models) or curating/releasing new assets, check if you include:
 \begin{enumerate}
   \item Citations of the creator If your work uses existing assets. [Yes] %[Yes/No/Not Applicable]
   \item The license information of the assets, if applicable. [Not Applicable]
   \item New assets either in the supplemental material or as a URL, if applicable. [Not Applicable]
   \item Information about consent from data providers/curators. [Yes]
   \item Discussion of sensible content if applicable, e.g., personally identifiable information or offensive content. [Not Applicable]
 \end{enumerate}

 \item If you used crowdsourcing or conducted research with human subjects, check if you include:
 \begin{enumerate}
   \item The full text of instructions given to participants and screenshots. [Not Applicable]
   \item Descriptions of potential participant risks, with links to Institutional Review Board (IRB) approvals if applicable. [Not Applicable]
   \item The estimated hourly wage paid to participants and the total amount spent on participant compensation. [Not Applicable]
 \end{enumerate}

 \end{enumerate}

\newpage 
\clearpage
\onecolumn

\begin{algorithm}[!tb]
\caption{SiCL Workflow for Predicting Causal Structures}
\label{alg:workflow}
\begin{algorithmic}
% \STATE {\bfseries Input:} test\_data
% \STATE {\bfseries Output:} predicted\_cpdag
% \STATE
\STATE \textbf{Procedure} INFERENCE(data, target)
\STATE Calculate node features with node encoder
\STATE Calculate pairwise features with pairwise encoder following Sec. \ref{sec:fem}
\IF {target is skeleton }
\STATE Calculate skeleton with Sec. \ref{sec:met:lic}
\ELSE
\STATE Calculate v-structures with Sec. \ref{sec:met:lic}
\ENDIF
\STATE \textbf{End Procedure}
\STATE
\STATE \textbf{Procedure} TRAINING\_PHASE()
\STATE  skeleton\_predictor $\leftarrow$ init\_skeleton\_predictor()

\STATE  Sample graphs and corresponding data
\STATE Training the skeleton predictor with INFERENCE(data, skeleton)
\STATE Training the v-structure predictor with INFERENCE(data, v-structure), with feature encoders fine-tuned from skeleton predictor
\STATE \textbf{End Procedure}
\STATE
% \STATE graph\_distribution $\leftarrow$ DEFINE\_GRAPH\_DISTRIBUTION()
\STATE \textbf{Procedure} TESTING\_PHASE(test\_data)
\STATE  Calculate predicted skeleton with the trained skeleton predictor
\STATE  Calculate predicted v-structures with the trained v-structure predictor
\STATE Combine predicted skeleton and v-structures to obtain predicted CPDAG
\STATE \textbf{End Procedure}
\end{algorithmic}
\end{algorithm}

\begin{figure*}[t]
\centering
    \includegraphics[width=\linewidth]{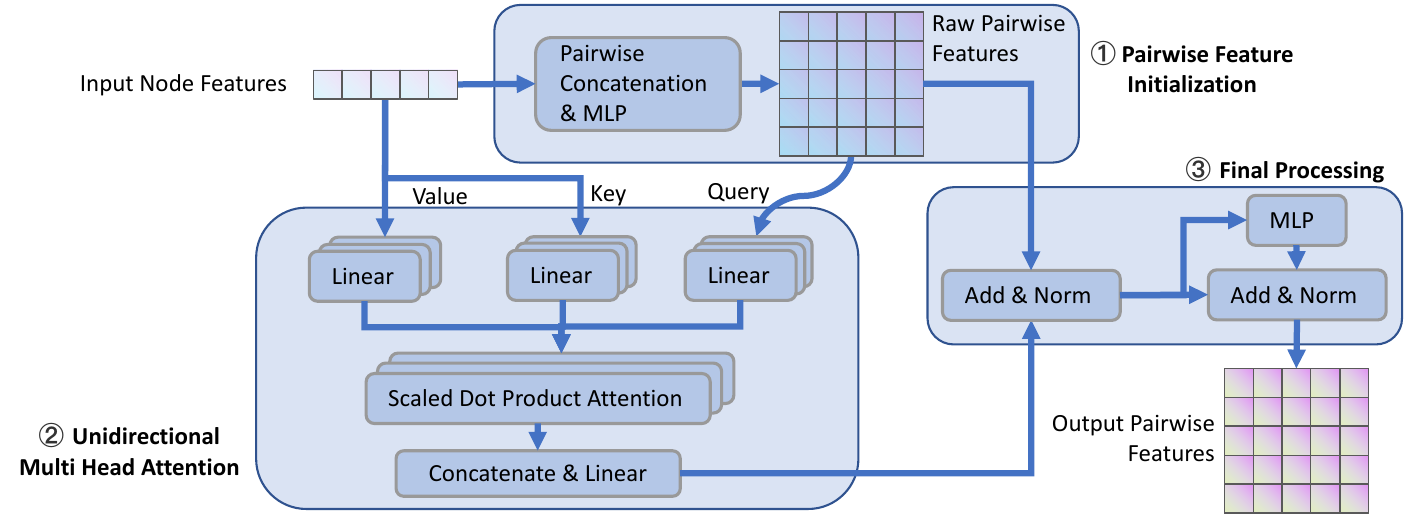}
    \caption{Illustration of the pairwise encoder module. \textcolor{black}{In Part \ding{172}, it initializes raw pairwise features. In Part \ding{173}, a unidirectional attention is applied to utilized information from node features and pairwise features. In Part \ding{174}, an MLP and residual connection is used to yield final pairwise features.}}
          % \vspace{-0.15in}
    \label{fig:pem}
    
\end{figure*}

\section{Theoretical Guarantee} \label{sec:app:tg}
% In this section, we present the theoretical analysis on the asymptotically correctness of our model. In Sec. \ref{sec:da}, we provide the necessary definitions and our assumptions of the problem. 
% In Sec. \ref{sec:sl} - \ref{sec:ol}, we prove the asymptotically correctness of the neural network model.
% In Sec. \ref{sec:d}, we discuss the practical superiority of the neural network models.
In this section, we delve into the theoretical analysis concerning the asymptotic correctness of our proposed model with respect to the sample size. Sec. \ref{sec:da} lays out the essential definitions and assumptions pertinent to the problem under study. Following this, from Sec. \ref{sec:sl} to \ref{sec:ol}, we rigorously demonstrate the asymptotic correctness of the neural network model. Finally, in Sec. \ref{sec:d}, we engage in a detailed discussion about the practical advantages and superiority of neural network models.

\subsection{Definitions and Assumptions} \label{sec:da}
As outlined in Sec. \ref{sec:bg}, a Causal Graphical Model is defined by a joint probability distribution $P$ over $d$ random variables $X_1, X_2, \cdots, X_{d}$, and a DAG $G$ with $d$ vertices representing the $d$ variables.
An observational dataset $D$ consists of $n$ records and $d$ columns, which represents $n$ instances drawn i.i.d. from $P$. 
In this work, we assume causal sufficiency:
\begin{Assumption}[Causal Sufficiency] \label{ass:cs}
    There are no latent common causes of any of the variables in the graph. 
\end{Assumption}
Moreover, we assume the data distribution $P$ is Markovian to the DAG $G$:
\begin{Assumption}[Markov Factorization Property]\label{ass:mk}
      Given a joint probability distribution $P$ and a DAG $G, P$ is said to satisfy Markov factorization property w.r.t. $G$ if $P:=$ $P\left(X_1, X_2, \cdots, X_d\right)=\prod_{i=1}^d P\left(X_i \mid \mathrm{pa}_i^G\right)$, where $\mathrm{pa}_i^G$ is the parent set of $X_i$ in $G$.
\end{Assumption}
It is noteworthy that the Markov factorization property is equivalent to the Global Markov Property (GMP) \citep{lauritzen1996graphical}, which is
\begin{Definition}[Global Markov Property (GMP)]
    $P$ is said to satisfy GMP (or Markovian) w.r.t. a DAG $G$ if $X \perp_G Y|Z \Rightarrow X \perp Y| Z$. Here $\perp_G$ denotes d-separation, and $\perp$ denotes statistical independence. 
\end{Definition}
GMP indicates that any d-separation in graph $G$ implies conditional independence in distribution $P$. We further assume that $P$ is faithful to $G$ by:
\begin{Assumption}[Faithfulness]\label{ass:f}
Distribution $P$ is faithful w.r.t. a DAG $G$ if $X \perp Y\left|Z \Rightarrow X \perp_G Y\right| Z$.
\end{Assumption}

\begin{Definition}[Canonical Assumption] \label{ass:ca}
    We say our settings satisfy the canonical assumption if the Assumptions \ref{ass:cs} - \ref{ass:f} are all satisfied.
\end{Definition}
We restate the definitions of skeletons, Unshielded Triples (UTs) and v-strucutres as follows.
\begin{Definition}[Skeleton]
    A skeleton $E$ defined over the data distribution $P$ is an undirected graph where an edge exists between $X_i$ and $X_j$ if and only if $X_i$ and $X_j$ are always dependent in $P$, i.e., $\forall Z \subseteq\left\{X_1, X_2, \cdots, X_d\right\} \backslash \left\{X_i, X_j \right\}$, we have $X_i \nperp X_j | Z$.
\end{Definition}
Under our assumptions, the skeleton is the same as the corresponding undirected graph of $G$ \citep{spirtes2000causation}. 
\begin{Definition}[Unshielded Triples (UTs) and V-structures]
A triple of variables $X, T, Y$ is an Unshielded Triple (UT) denoted as $\langle X, T, Y \rangle$, if $X$ and $Y$ are both adjacent to $T$ but not adjacent to each other in the DAG $G$ or the corresponding skeleton.
It becomes a v-structure denoted as $X \rightarrow T \leftarrow Y$, if the directions of the edges are from $X$ and $Y$ to $T$ in $G$.
\end{Definition}
We introduce the definition of separation set as:
\begin{Definition}[Separation Set]
    For a node pair $X_i$ and $X_j$, a node set $Z$ is a separation set if $X_i \perp X_j | Z $. Under faithfulness assumption, a separation set $Z$ is a subset of variables within the vicinity that d-separates $X_i$ and $X_j$.
\end{Definition}

Finally, we assume a neural network can be used as a universal approximator in our settings.
\begin{Assumption}[Universal Approximation Capability]
    A neural network model can be trained to approximate a function under our settings with arbitary accuracy. \label{ass:uac}
\end{Assumption}

\subsection{Skeleton Learning} \label{sec:sl}
In this section, we prove the asymptotic correctness of neural networks on the skeleton prediction task by constructing a perfect model and then approximating it with neural networks. 
For the sake of convenience and brevity in description, we define the skeleton predictor as follows. 
\begin{Definition}[Skeleton Predictor]
    Given observational data $D$, a skeleton predictor is a predicate function with domain as observational data $D$ and predicts the adjacency between each pair of the vertices.
\end{Definition}
Now we restate the Remark from \cite{ma2022ml4s} as the following proposition. 
It proves the existence of a perfect skeleton predictor by viewing the skeleton prediction step of PC \citep{spirtes2000causation} as a skeleton predictor, which is proved to be sound and complete.
\begin{Proposition}[Existence of a Perfect Skeleton Predictor]
There exists a skeleton predictor that always yields the correct skeleton with sufficient samples in $D$. \label{prop:epsp}
\end{Proposition}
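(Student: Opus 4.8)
The plan is to exhibit an explicit skeleton predictor and show it recovers the true skeleton in the large-sample limit, leveraging the soundness and completeness of the adjacency-search (skeleton) phase of the PC algorithm under the canonical assumption (Definition \ref{ass:ca}). First I would take as the candidate predictor exactly this PC skeleton phase, instantiated with a \emph{consistent} conditional-independence (CI) test whose accept/reject decision converges in probability to the population truth ($X_i \perp X_j \mid Z$ in $P$) as the sample size grows. Concretely, the predictor initializes the complete undirected graph and, for conditioning-set sizes $k = 0, 1, 2, \dots$, deletes the edge $X_i - X_j$ whenever it finds a size-$k$ subset $Z$ of the current neighbors of $X_i$ or of $X_j$ for which the CI test declares $X_i \perp X_j \mid Z$.

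Next I would establish correctness at the population level, assuming the CI test returns the true conditional (in)dependences of $P$. For \textbf{soundness}, whenever the algorithm deletes an edge $X_i - X_j$ it has found a set $Z$ with $X_i \perp X_j \mid Z$; Faithfulness (Assumption \ref{ass:f}) then gives $X_i \perp_G X_j \mid Z$, so $X_i$ and $X_j$ are non-adjacent in $G$ and no true edge is ever removed. For \textbf{completeness}, if $X_i$ and $X_j$ are non-adjacent in $G$ they are d-separated by some subset of the parents of one of them; the Markov property (Assumption \ref{ass:mk}) turns this d-separation into a genuine CI in $P$, and the invariant that the running neighbor sets always contain the true adjacencies guarantees such a separating set is among the candidate conditioning sets actually examined, so the edge is deleted. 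Combining the two directions, the undirected graph returned under the CI oracle coincides with the true undirected skeleton of $G$, which under the canonical assumption equals the population skeleton $E$ of Sec. \ref{sec:da} \citep{spirtes2000causation}.

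Finally I would transfer the population result to the finite-sample regime. The predictor issues only finitely many CI queries (at most of order $d^2 2^{d}$), and by consistency of the CI test each individual query matches its population value with probability tending to one as $n$ grows; a union bound over these finitely many queries makes the entire sequence of decisions correct with high probability, so for all sufficiently large $n$ the estimated skeleton equals the true skeleton. This is precisely the asserted ``correct skeleton with sufficient samples,'' establishing the existence claimed in Proposition \ref{prop:epsp}.

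The step I expect to be the main obstacle is the completeness direction of the population argument: one must verify that the conditioning sets the algorithm \emph{actually} tests contain a genuine d-separating set for every absent edge. This is exactly the delicate invariant in the classical PC correctness proof---that adjacency sets only shrink toward, and never drop below, the true neighborhoods during the search---so I would treat it most carefully, or simply cite it from \citep{spirtes2000causation,ma2022ml4s}, since soundness and the finite-sample union bound are comparatively routine.
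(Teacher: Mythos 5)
Your proof is correct, and at its core it follows the same strategy as the paper's: view the skeleton phase of PC as the predictor and verify soundness (faithfulness implies no true edge is ever deleted) and completeness (the Markov property implies every absent edge gets deleted), exactly mirroring the paper's ``no false negative''/``no false positive'' split. The difference is in which variant of PC is formalized. The paper's predictor is the brute-force one: it computes $\boldsymbol{x}_{ij}=\min_{Z\subseteq V\setminus\{X_i,X_j\}}\left\{X_i \sim X_j \mid Z\right\}$ over \emph{all} conditioning sets and declares adjacency iff $\boldsymbol{x}_{ij}>0$. Minimizing over all subsets makes completeness immediate --- non-adjacency plus the Markov property supplies some $Z$ with zero dependency, and no argument about which sets are actually examined is needed --- so the delicate invariant you rightly single out as the main obstacle (that PC's shrinking neighbor sets never drop below the true adjacencies) simply does not arise in the paper's proof. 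Your version instead proves correctness of the genuine neighbor-restricted iterative search, which is harder but closer to the algorithm one would actually run; the paper's exhaustive formulation buys a two-line argument at the cost of factorial query complexity, a cost the paper itself acknowledges in its discussion (Sec. \ref{sec:d}). Your final step also makes explicit something the paper leaves implicit: the finite-sample transfer via a union bound over finitely many consistent CI queries, whereas the paper works directly with population-level dependency values and reads ``sufficient samples'' as the oracle regime. One small caveat there: the union bound yields correctness with probability tending to one as $n\to\infty$, not deterministically ``for all sufficiently large $n$''; this matches the informal phrasing of the proposition, but it should be stated as a consistency/high-probability guarantee rather than an almost-sure eventual one.
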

\begin{proof}
    We construct a skeleton predictor $SP$ consisting of two parts by viewing PC \citep{spirtes2000causation} as a skeleton predictor. 
    In the first part, it extracts a pairwise feature $\boldsymbol{x}_{i j}$ for each pair of nodes $X_i$ and $X_j$:
    \begin{align}
            % \boldsymbol{x}_{i j}= \left|\left\{Z | Z \subseteq V \backslash\{v_i, v_j\} \wedge  (v_i \perp v_j \mid Z)\right\}\right|, \label{equ:sp1}
        \boldsymbol{x}_{i j}=\min _{Z \subseteq V \backslash\left\{X_i, X_j\right\}}\left\{X_i \sim X_j \mid Z\right\}, \label{equ:sp1}
    \end{align}
    where $\left\{X_i \sim X_j \mid Z\right\} \in [0, 1] $ is a scalar value that measures the conditional dependency between $X_i$ and $X_j$ given a node subset $Z$. 
    % where $v_i \perp v_j \mid Z$ indicates whether the conditional dependency exists between $v_i$ and $v_j$ given a node subset $Z$. 
    % Intuitively, $\boldsymbol{x}_{i j}$ represents the counts of subsets to make $v_i$ and $v_j$ conditional dependency.
    Consequently, $\boldsymbol{x}_{i j} > 0$ indicates the persistent dependency between the two nodes.
    
    In the second part, it predicts the adjacency based on $\boldsymbol{x}_{i j}$:
    \begin{align}
        \left(X_i,  X_j\right)= \begin{cases} 1 \text { (adjacent) } & \boldsymbol{x}_{i j} \neq 0 \\ 0 \text { (non-adjacent) } & \boldsymbol{x}_{i j} = 0\end{cases} \label{equ:sp2}
    \end{align}

    Now we prove that $SP$ always yields the correct skeleton by proving the absence of false positive predictions and false negative predictions. Here, false positive prediction denotes $SP$ predicts a non-adjacent node pair as adjacent and false negative predictions denote $SP$ predicts an adjacent node pair as non-adjacent.

    \begin{itemize}[leftmargin=*]
        \item \textbf{False Positive.} Suppose $X_i, X_j$ are non-adjacent. Under the Markovian assumption, there exists a set of nodes $Z$ such that $\left\{X_i \sim X_j \mid Z\right\} = 0$ and hence $\boldsymbol{x}_{ij} = 0$. According to Eq. (\ref{equ:sp2}), $SP$ will always predicts them as non-adjacent.
        \item \textbf{False Negative}. Suppose $X_i, X_j$ are adjacent. Under the faithfulness assumption, for any $Z \in V \backslash \left\{X_i, X_j\right\}, \left\{X_i \sim X_j \mid Z\right\} > 0$, which implies $\boldsymbol{x}_{ij} > 0$. Therefore, $SP$ always predicts them as adjacent.
    \end{itemize}

    Therefore, $SP$ never yields any false positive predictions or false negative predictions under the Markovian assumption and faithfulness assumption, i.e., it always yields the correct skeleton.
\end{proof}

% \textbf{Remark:} % The constructed perfect skeleton predictor is not a continuous function. 
% The mapping from $\boldsymbol{x}_{ij}$ to adjacency shown in Eq. \ref{equ:sp2} can be continuously approximated. 
% Therefore, there exists arbitrarily approximate continuous function for the perfect skeleton predictor. 
%The intuition behind is that the conditional dependency test in Eq. (\ref{equ:sp1}) can be estimated by some continuous functions like $G^2$ test \citep{agresti2012categorical} and mutual information . 

With the existence of a perfect skeleton predictor, we prove the correctness of neural network models with sufficient samples under our assumptions.
\begin{Theorem}
Under the canonical assumption and the assumption that neural network can be used as a universal approximator (Assumption \ref{ass:uac}),
there exists a neural network model that always predicts the correct skeleton with sufficient samples in $D$.
\end{Theorem}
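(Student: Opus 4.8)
The plan is to lift the perfect (non-neural) skeleton predictor of Proposition \ref{prop:epsp} into a neural network by invoking the universal approximation capability of Assumption \ref{ass:uac}. Recall that the predictor $SP$ factors as a composition of a feature map $D \mapsto \boldsymbol{x}_{ij}$ defined by Eq. (\ref{equ:sp1}) and the zero/non-zero decision rule of Eq. (\ref{equ:sp2}). My first step is to observe that the feature map is a continuous functional of the finite-sample empirical distribution induced by $D$: each conditional-dependency score $\{X_i \sim X_j \mid Z\}$ depends continuously on the sample, and a finite minimum of continuous functions is again continuous. Hence $D \mapsto \boldsymbol{x}_{ij}$ is a continuous target that a neural network can approximate to arbitrary accuracy on a compact domain under Assumption \ref{ass:uac}.

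The second step is to produce a separation margin that converts approximate feature recovery into exact skeleton recovery. Under the canonical assumption, in the population limit the score $\boldsymbol{x}_{ij}$ equals $0$ exactly for every non-adjacent pair (Markov property) and is strictly positive for every adjacent pair (faithfulness). Since there are only finitely many pairs, there is a constant $\delta > 0$ such that in the limit every adjacent pair satisfies $\boldsymbol{x}_{ij} \geq \delta$ while every non-adjacent pair satisfies $\boldsymbol{x}_{ij} = 0$. With sufficiently many samples the empirical scores concentrate around their population values, so a gap of comparable size (with $\delta$ replaced by, say, $\delta/2$) persists for the empirical feature map with high probability.

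The third step is the construction itself. Fix an approximation tolerance $\epsilon$ that is small relative to the margin, and let $NN$ be a network that, by Assumption \ref{ass:uac}, approximates the feature map uniformly within $\epsilon$. Replacing the hard threshold at $0$ in Eq. (\ref{equ:sp2}) by a threshold placed in the middle of the gap (which can itself be realized inside the network by a steep sigmoid, or read off from the output), the approximation error is too small to flip any decision: adjacent pairs are mapped above the threshold and non-adjacent pairs below it. Therefore $NN$ reproduces the output of $SP$ exactly, and by Proposition \ref{prop:epsp} this output is the correct skeleton whenever the sample is large enough, establishing the claim.

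I expect the main obstacle to be the discontinuity of the decision rule in Eq. (\ref{equ:sp2}): a continuous neural network cannot match a step function at $0$, and universal approximation by itself only guarantees closeness in function value, not correctness of a downstream hard threshold. The whole argument therefore hinges on the faithfulness-induced margin of the second step, which guarantees that the target never takes values inside a neighborhood of the threshold, so that $\epsilon$-closeness suffices to recover the discrete adjacency decision without error.
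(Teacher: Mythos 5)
Your proposal follows the same high-level route as the paper's own proof: take the perfect skeleton predictor $SP$ constructed in Proposition \ref{prop:epsp} and transfer it to a neural network via Assumption \ref{ass:uac}. The difference is one of rigor at exactly one point. The paper's proof is two sentences long: it asserts that since the network can approximate the perfect predictor to arbitrary accuracy, it ``hence predicts the correct skeleton,'' thereby silently applying Assumption \ref{ass:uac} to the discontinuous function $SP$ itself, whose output passes through the hard zero threshold of Eq.~(\ref{equ:sp2}). You correctly identify that uniform approximation of a step discontinuity is impossible for a continuous network and that $\epsilon$-closeness in function value does not by itself guarantee correctness of a downstream hard threshold; you then repair this with the faithfulness-induced margin: in the population, non-adjacent pairs give $\boldsymbol{x}_{ij}=0$ by the Markov property, adjacent pairs give $\boldsymbol{x}_{ij}\geq\delta>0$ by faithfulness together with finiteness of the pair set, concentration preserves a comparable gap for large $n$, and an $\epsilon$-approximation of the continuous feature map of Eq.~(\ref{equ:sp1}) followed by a threshold placed inside the gap reproduces $SP$ exactly. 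This buys a proof that goes through under the literal reading of Assumption \ref{ass:uac} (approximation of continuous targets on a compact domain), whereas the paper effectively needs the stronger reading in which the assumption is stipulated to apply directly to the discontinuous predictor. Two caveats, both shared with (and unaddressed by) the paper: your margin $\delta$ is specific to a fixed distribution $P$, so a single network working uniformly over a class of generating distributions would need a uniform margin over that class, and the infimum of $\delta$ over all faithful distributions is zero; and your concentration step converts ``always'' into ``with high probability,'' which is the honest finite-sample reading of the theorem's ``sufficient samples'' clause but is strictly weaker than the stated conclusion unless one passes to the population limit.
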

\begin{proof}
    From Proposition \ref{prop:epsp}, there exists a perfect skeleton predictor that predicts the correct skeleton. 
    Thus, according to the Assumption \ref{ass:uac}, a neural network model can be trained to approximate the perfect skeleton prediction hence predicts the correct skeleton. 
\end{proof}

\subsection{Orientation Learning} \label{sec:ol}
Similarly to the overall thought process in Sec. \ref{sec:sl}, in this section we prove the asymptotic correctness of neural networks on the v-structure prediction task by constructing a perfect model and then approximating it with neural networks. 
\begin{Definition}[V-structure Predictor]
    Given observational data $D$ with sufficient samples from a $BN$ with vertices $V = \{X_1, \dots, X_p\}$, a v-structure predictor is a predicate function with domain as observational data $D$ and predicts existence of the v-structure for each unshielded triple.
\end{Definition}
The following proposition proves the existence of a perfect v-structure predictor by viewing the orientation step of PC \citep{spirtes2000causation} as a v-structure predictor.
\begin{Proposition}[Existence of a Perfect V-structure Predictor]
Under the Markov assumption and faithfulness assumption, there exists skeleton predictor that always yields the correct skeleton. \label{prop:epvp}
\end{Proposition}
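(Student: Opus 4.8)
The stated proposition is the v-structure analogue of Proposition~\ref{prop:epsp} (the statement appears to carry over the wording ``skeleton predictor''/``correct skeleton'' by mistake, but the intended claim is the existence of a perfect v-structure predictor), so the plan is to mirror that argument: exhibit one concrete predictor built from conditional-independence queries that is provably sound and complete in the infinite-sample limit, and thereby establish existence. Concretely, I would assume the correct skeleton is already available (guaranteed by Proposition~\ref{prop:epsp}), and for every unshielded triple $\langle X_i, X_k, X_j\rangle$ in that skeleton define the feature
\[
v_{ikj} \;=\; \min_{Z \subseteq V \setminus \{X_i, X_j, X_k\}} \bigl\{ X_i \sim X_j \mid Z \bigr\},
\]
using the same conditional-dependency score $\{X_i \sim X_j \mid Z\}\in[0,1]$ as in Eq.~(\ref{equ:sp1}), which vanishes exactly when $X_i \perp X_j \mid Z$. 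The predictor then declares $\langle X_i, X_k, X_j\rangle$ a v-structure iff $v_{ikj}=0$, i.e.\ iff some separating set that excludes the centre node $X_k$ exists. This matches the paper's own characterisation of v-structures and corresponds to reading the orientation phase of PC as a predicate on unshielded triples.

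Next I would prove correctness by ruling out false positives and false negatives. For soundness (no false positive), suppose $v_{ikj}=0$, so there is a set $Z$ with $X_k\notin Z$ and $X_i\perp X_j\mid Z$; by faithfulness (Assumption~\ref{ass:f}) this yields d-separation $X_i\perp_G X_j\mid Z$. Since $X_i - X_k - X_j$ is a path in the skeleton and $X_k\notin Z$, were $X_k$ a non-collider this path would be active, contradicting the d-separation; hence $X_k$ is a collider and, as $X_i,X_j$ are non-adjacent, the triple is a genuine v-structure. For completeness (no false negative), suppose the triple is a true v-structure $X_i\to X_k\leftarrow X_j$. Because $X_i,X_j$ are non-adjacent, the Markov property (Assumption~\ref{ass:mk}) guarantees at least one separating set $Z$ with $X_i\perp_G X_j\mid Z$; and any such $Z$ must exclude the collider $X_k$ (together with its descendants), since conditioning on a collider would open the path $X_i\to X_k\leftarrow X_j$ and d-connect the endpoints. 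Thus a separating set avoiding $X_k$ exists, forcing $v_{ikj}=0$.

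The step I expect to be the crux is the completeness direction: one must argue not merely that some separating set of the endpoints exists, but that a separating set avoiding the centre node exists. The clean way around this is the observation that \emph{every} valid separating set of a collider's two parents automatically omits the collider, so the existence of any separating set (supplied by the Markov property) already delivers one of the required form; this is exactly where the collider/d-separation reasoning carries the argument. Once the perfect predicate $v_{ikj}$ is in place, the existence claim is immediate, and, paralleling the Theorem in Sec.~\ref{sec:sl}, invoking universal approximation (Assumption~\ref{ass:uac}) would upgrade it to a neural network that reproduces these v-structure decisions with sufficiently many samples.
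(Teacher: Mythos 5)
Your proposal is correct, and you rightly flagged that the proposition's statement is a copy-paste artifact (it repeats ``skeleton predictor \ldots\ correct skeleton'' where it plainly intends a v-structure predictor); the paper's own proof indeed proves the v-structure claim. Your route matches the paper's in outline---read PC's orientation phase as a predicate on unshielded triples, then invoke universal approximation---but differs in two substantive details. First, your predictor tests whether \emph{some} separating set excluding the centre node exists, via $v_{ikj}=\min_{Z\subseteq V\setminus\{X_i,X_j,X_k\}}\{X_i\sim X_j\mid Z\}$, whereas the paper's predictor finds \emph{one} sepset $Z$ with $X_i\perp X_j\mid Z$ and checks the membership $X_k\in Z$. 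These predicates agree only because of the all-or-nothing property (for a UT under faithfulness, the centre lies in every sepset or in none), which the paper obtains by citing Theorem 5.1 of \cite{spirtes2000causation}; you instead prove both directions from scratch with d-separation arguments (non-collider with $X_k\notin Z$ would keep the path $X_i-X_k-X_j$ active; a collider is excluded from every d-separating set). Your version is thus more self-contained, and your ``exists a sepset avoiding the centre'' predicate is sound even without the all-or-nothing lemma, at the cost of an exponential minimization that the paper's single-sepset test avoids. One minor attribution slip: the existence of a separating set for a non-adjacent pair is a graph-theoretic fact about DAGs (the paper cites Lemma 4.1 of \cite{dai2023ml4c}), not a consequence of the Markov property itself; the Markov property only converts that d-separation into statistical independence---your chain of reasoning uses both pieces, so the argument still goes through.
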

\begin{proof}
    We construct a v-structure predictor $VP$ consisting of two parts by viewing PC \citep{spirtes2000causation} as a v-structure predictor. 
    % In the first part, it extracts a feature $\boldsymbol{z}_{ijk}$ for each UT $\langle X_i, X_k, X_j \rangle$:
    % \begin{align}
    %     \boldsymbol{z}_{ijk} = \frac{\left|\left\{ (X_k, Z) | \{ X_i \sim X_j | Z\} = 0 \wedge X_k \in Z \right\}\right|}{\left|\left\{ Z | \{ X_i \sim X_j | Z\} = 0 \right\}\right|},
    % \end{align}
        In the first part, it extracts a boolean feature $\boldsymbol{z}_{kij}$ for each UT $\langle X_i, X_k, X_j \rangle$:
    \begin{align}
        \boldsymbol{z}_{kij} = (X_k \in Z), \text{ where } Z \text{ is called as a sepset, i.e. } X_i\perp Y_j | Z.  \label{equ:vp}
        % \frac{\left|\left\{ (X_k, Z) | \{ X_i \sim X_j | Z\} = 0 \wedge X_k \in Z \right\}\right|}{\left|\left\{ Z | \{ X_i \sim X_j | Z\} = 0 \right\}\right|},
    \end{align}
    \color{black}

% where $\left\{X_i \sim X_j \mid Z\right\} \in [0, 1] $ is a scalar value that measures the conditional dependency between $X_i$ and $X_j$ given a node subset $Z$, and $|\cdot|$ represents the cardinality of a set. 
% Note that the denominator is always positive because the separation set of a UT always exists (See Lemma 4.1 in \cite{dai2023ml4c}).
Note that the sepset $Z$ always exists because the separation set of a UT always exists (See Lemma 4.1 in \cite{dai2023ml4c}).
% % Moreover, $\boldsymbol{z}_{ijk}$ is either $0$ or $1$ under our assumptions and sufficient samples.
% Intuitively, $\boldsymbol{z}_{ijk}$ represents the proportion of supsets of $X_i$ and $X_j$ that include $X_k$.

In the second part, it predicts the v-structures based on $\boldsymbol{z}_{ijk}$:
% \begin{align}
% \langle X_i, X_k, X_j\rangle = \begin{cases} 0 \text { (not v-structure) } & \boldsymbol{z}_{i j k} \neq 0 \\ 1 \text { (v-structure) } & \boldsymbol{z}_{i j k} = 0\end{cases} \label{equ:vp2}
% \end{align}
\begin{align}
\langle X_i, X_k, X_j\rangle = \begin{cases} 0 \text { (not v-structure) } & \boldsymbol{z}_{k i j } = True \\ 1 \text { (v-structure) } & \boldsymbol{z}_{k i j } = False\end{cases} \label{equ:vp2}
\end{align}
Now we prove that $VP$ always yields the correct predictions of v-structures.
According to Theorem 5.1 on p.410 of \cite{spirtes2000causation}, assuming faithfulness and sufficient samples, if a UT $\langle X_i, X_k, X_j \rangle$ is a v-structure, then $X_k$ does not belong to any separation sets of $(X_i, X_j)$; if a UT $\langle X_i, X_k, X_j \rangle$ is not a v-structure, then $X_k$ belongs to every separation sets of $(X_i, X_j)$. Therefore, we have $\boldsymbol{z}_{kij} = False$ if and only if $X_k$ is not in any separation set of $X_i$ and $X_j$, i.e., $\langle X_i, X_k, X_j \rangle$ is a v-structure.
\color{black}
\end{proof}

With the existence of a perfect v-structure predictor, we prove the correctness of neural network models with sufficient samples under our assumptions.
\begin{Theorem}
Under the canonical assumption and the assumption that neural network can be used as a universal approximator (Assumption \ref{ass:uac}), there exists a neural network model that always predicts the correct v-structures with sufficient samples in $D$.
\end{Theorem}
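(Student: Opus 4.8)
The plan is to mirror exactly the argument structure used for the skeleton-learning theorem in Sec.~\ref{sec:sl}, invoking the two ingredients that are already in place. First I would note that Proposition~\ref{prop:epvp} establishes the existence of a \emph{perfect} v-structure predictor $VP$: a predicate function that, given observational data $D$ with sufficient samples, correctly classifies every unshielded triple as a v-structure or not, under the Markov and faithfulness assumptions. Since the canonical assumption (Definition~\ref{ass:ca}) subsumes causal sufficiency, the Markov property, and faithfulness, this perfect predictor exists in our setting.

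The second ingredient is the Universal Approximation Capability (Assumption~\ref{ass:uac}), which grants that any function arising in our setting can be approximated by a neural network to arbitrary accuracy. The proof then proceeds in two short steps. First, I would appeal to Proposition~\ref{prop:epvp} to fix the perfect v-structure predictor $VP$, which outputs the correct v-structure labels for all unshielded triples whenever $D$ has sufficiently many samples. Second, I would invoke Assumption~\ref{ass:uac} to conclude that a neural network can be trained to approximate $VP$; since the prediction target is discrete (the binary v-structure indicator of Eq.~(\ref{equ:vp2})), sufficiently accurate approximation of the underlying predicate coincides with it, so the network reproduces the correct v-structure predictions with sufficient samples in $D$. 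This yields the asymptotic correctness claimed.

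The main subtlety to handle carefully is the interface between the real-valued approximation guaranteed by Assumption~\ref{ass:uac} and the \emph{exact} discrete correctness asserted in the theorem. The perfect predictor $VP$ emits boolean decisions, whereas a neural network emits continuous scores (passed through a sigmoid, as in Sec.~\ref{sec:met:lic}); I would address this by observing that since the target function takes values in a discrete set, an approximation error below the decision margin (e.g.\ below $1/2$ after thresholding) forces the network to agree with $VP$ exactly on every unshielded triple. Thus the composite estimator inherits the correctness of $VP$ in the infinite-sample limit, completing the proof.

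This step is expected to be more of a bookkeeping matter than a genuine obstacle, since it exactly parallels the skeleton case; the real conceptual content was already discharged in Proposition~\ref{prop:epvp}, where the classical soundness-and-completeness result of \cite{spirtes2000causation} (Theorem~5.1) guarantees that a v-structure is present precisely when the apex node $X_k$ lies in no separation set of its endpoints.
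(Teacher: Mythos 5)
Your proposal is correct and follows essentially the same route as the paper's own proof, which likewise combines the perfect v-structure predictor of Proposition~\ref{prop:epvp} with the universal-approximation Assumption~\ref{ass:uac} (the paper's proof even miscites Proposition~\ref{prop:epsp} and speaks of a ``skeleton predictor that predicts the correct v-structures,'' a typo your version silently corrects). Your added observation that a discrete target makes sub-margin approximation coincide with exact agreement after thresholding is a detail the paper leaves implicit, and it strengthens rather than changes the argument.
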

\begin{proof}
    From Proposition \ref{prop:epsp}, there exists a perfect skeleton predictor that predicts the correct v-structures. 
    Thus, according to the Assumption \ref{ass:uac}, a neural network model can be trained to approximate the perfect v-structure predictions hence predicts the correct v-structures. 
\end{proof}
% \begin{Theorem}
% Under our assumptions, neural network models can predict the correct v-structures.
% \end{Theorem}
% \begin{proof}
%     From Proposition \ref{prop:epvp}, there exists a perfect v-structure predictor that predicts the correct v-structure. 
%     Thus, according to the Assumption \ref{ass:uac}, a neural network model can be trained to predict the correct skeleton. 
% \end{proof}

\subsection{Discussion} \label{sec:d}
In the sections above, we prove the asymptotic correctness of neural network models by constructing theoretically perfect predictors.
These predictors both consist of two parts: feature extractors providing features $\boldsymbol{x}_{ij}$ and $\boldsymbol{z}_{ijk}$, and final predictors of adjacency and v-structures.
Even though they have a theoretical guarantee of the correctness with sufficient samples, it is noteworthy that they are hard to be applied practically.
For example, to obtain $\boldsymbol{x}_{ij}$ in Eq. (\ref{equ:sp1}), we need to calculate the conditional dependency between $X_i$ and $X_j$ given every node subset $Z \subseteq V \backslash\left\{X_i, X_j\right\}$.
Leaving aside the fact that the number of $Z$s itself presents factorial complexity, the main issue is that when $Z$ is relatively large, due to the curse of dimensionality, it becomes challenging to find sufficient samples to calculate the conditional dependency. 
This difficulty significantly hampers the ability to apply the constructed prefect predictors in practical scenarios.

Some existing methods can be interpreted as constructing more practical predictors.
Majority-PC (MPC) \citep{colombo2014order} achieves better performance on finite samples by modifying Eq. (\ref{equ:vp}) - (\ref{equ:vp2}) as:
    \begin{align}
        \boldsymbol{z}_{kij} = \frac{\left|\left\{ (X_k, Z) | \{ X_i \sim X_j | Z\} = 0 \wedge X_k \in Z \right\}\right|}{\left|\left\{ Z | \{ X_i \sim X_j | Z\} = 0 \right\}\right|},
    \end{align}
% Note that the denominator is always positive because the separation set of a UT always exists (See Lemma 4.1 in \cite{dai2023ml4c}).
    and
\begin{align}
    \left\langle X_i, X_k, X_j\right\rangle= \begin{cases}0 \text { (not v-structure) } & \boldsymbol{z}_{i j k} > 0.5 \\ 1(\mathrm{v} \text {-structure) } & \boldsymbol{z}_{i j k} \leq 0.5,\end{cases}
\end{align}
    where $\left\{X_i \sim X_j \mid Z\right\} \in [0, 1] $ is a scalar value that measures the conditional dependency between $X_i$ and $X_j$ given a node subset $Z$, and $|\cdot|$ represents the cardinality of a set. 
\color{black}
Due to its more complex classification mechanism, it achieves better performance empirically. 
However, from the machine learning perspective, features from both the PC and MPC predictors are relatively simple.
As supervised causal learning methods, ML4S \citep{ma2022ml4s} and ML4C \citep{dai2023ml4c} provide more systematic featurizations by manual feature engineering and utilization of powerful machine learning models for classification.
% Even though they achieve better performance in practice, the manual feature engineering is complex.
% In out paper, we use neural networks as universal approximator to learn the prediction of identifiable causal structures.
% SCL with NN 也同时在其他地方讨论，如kenan...
While these methods show enhanced practical efficacy, their manual feature engineering processes are complex. 
In our paper, we utilize neural networks as universal approximators for learning the prediction of identifiable causal structures.
It not only simplifies the procedure but also potentially uncovers more nuanced and complex patterns within the data that manual methods might overlook.
It is noteworthy that the benefits of supervised causal learning using neural networks are also discussed elsewhere, as mentioned in SLdisco \citep{petersen2023causal} and CSIvA \citep{ke2023learning}.

\begin{algorithm}[!tb]
\caption{Post-processing}
\label{alg:post}
\begin{algorithmic}
\STATE {\bfseries Input:} weighted skeleton matrix $S$, weighted V-tensor $U$, threshold for skeleton $\tau_s$, threshold for v-structure $\tau_v$
\STATE {\bfseries Output:} predicted oriented edge set $\texttt{oriEdges}$, predicted skeleton $\texttt{skeleton}$
% \STATE
\STATE \textbf{Step 1:}  \\
// Obtaining a predicted skeleton by thresholding. \\
$\texttt{skeleton} = \{(i, j) | max(S_{ij}, S_{ji}) > \tau_s\}$ \\
// Obtaining raw v-structures $\texttt{vstructs}_\texttt{raw}$ by thresholding. \\
$\texttt{vstructs}_\texttt{raw} = \{(i, j, k) | (i, j) \in \texttt{skeleton} \text{ and } (i, k) \in \texttt{skeleton} \text{ and } (j, k) \notin \texttt{skeleton} \text{ and } max(U_{ijk}, U_{ikj}) > \tau_v\}$ \\
\STATE \textbf{Step 2:}  \\
// V-structure conflict resolving: discard any v-structure if there exists another conflicted v-structure with a higher predicted score, following \citep{dai2023ml4c}. \\
$\texttt{vstructs} = \{(i, j, k) \in \texttt{vstructs}_\texttt{raw}|\forall (i', j', k') \in \texttt{vstructs}_\texttt{raw}, (i' \neq k \text{ and }i' \neq j)\text{ or } (k'\neq i \text{ and } j' \neq i) \text{ or } U_{i'j'k'} < U_{ijk}\}$ 
\STATE \textbf{Step 3:}  \\
// Obtaining the predicted directed edge from $\texttt{vstructs}$. \\
$\texttt{oriEdges}_{\texttt{raw}} = \{(j, i)| \exists k, (i, j, k) \in \texttt{vstructs}\}$ \\
//Set a score for each edge with the highest v-structure's score containing this edge. \\
Set $\{p_{ij}\}$ such that $p_{ij} = max_v U_v$ for $v \in \texttt{oriEdges}_{\texttt{raw}}$ and $v \ni (i, j)$. \\
// If there exist any cycles, remove the edge with the smallest score in each cycle. \\
$\texttt{oriEdges} = \{(i, j) \in \texttt{oriEdges}_{\texttt{raw}} |\forall \text{cycle } C, (i, j) \notin C \text{ or } (\exists (i', j') \in C, p_{ij} > p_{i'j'})\}$ \\
\STATE \textbf{Step 4:} \\
// Meek rules: Add edges to $\texttt{oriEdges}$ for directed edges that (1) introducing the edges does not lead to cycles or new v-structures; (2) adding the opposite edges necessarily leads to cycles or new v-structures. \\
$\texttt{oriEdges} = \texttt{oriEdges} \cup \{(i, j) \in \texttt{skeleton}| (i, j) \text{ complies with Meek rules} \}$ \\
\end{algorithmic}
\end{algorithm}

\section{More Discussions on Identifiability and Causal Assumptions} \label{sec:dica}
\textbf{Advocation of Learning Identifiable Structures under All Settings.}
In this paper, we have to work on a concrete setting for demonstration purpose with concrete identifiable causal structures in this paper.
Nonetheless, we want to emphasize that the very concept of identifiability, as well as its ramifications in SCL, is indeed a general issue that is less bound to the issue of ``which causal structures are identifiable under which assumptions". 
The simple fact that in some situations the causal edge cannot be identified -- no matter what feature can be identified in that case -- this identifiability limit has a general effect on SCL. 
Unless the causal graph/edge itself becomes fully invariant/identifiable (a special case that is important but certainly not universally true), the presence of the identifiability limit entails a fundamental bias for a popular SCL model architecture (i.e., Node-Edge) that cannot be mitigated by larger model or bigger data at all. 
This ``identifiability-limit-causes-learning-error" effect is the main thesis of this paper, and we advocate to design neural networks that focus on learning the identifiable features (no matter what those features are). 
In other words, there is nothing stopping one from studying another setting where another feature is identifiable though, and in that case we would also advocate to learn that feature instead of v-structures. 
For example, if we assume canonical MEC assumptions and non-existence of causal-fork and v-structures, the identifiable causal structure becomes a kind of chains.
In that case, one may want to design neural networks that predict about causal chains. 

\textbf{Rationality of Canonical Assumptions.}
In this paper, we choose the canonical setting under the classic MEC theory, in which the skeleton and v-structures are the identifiable structure.
This setting includes the assumptions of the Markov and faithfulness conditions.
Unlike scenario-specific assumptions, such as those tied to a particular data-generating process, these assumptions are classic assumptions about causality that are often adopted as ``postulates" about some general aspects of the world. For example,
\begin{itemize}
    \item \cite{pearl2009causality} argues that stability (faithfulness) stems from the natural improbability of strict equality constraints among parameters, which aligns with the autonomy of causal mechanisms.
\item \cite{spirtes2001causation} support the Causal Faithfulness Condition (CFC) by noting that the exact cancellation of causal paths is highly improbable under natural conditions.
\item \cite{weinberger2018faithfulness} reinforces this argument, proposing that coincidences leading to CFC violations are rare and lack explanatory power, further justifying its adoption within a general modeling framework.
\end{itemize}
These considerations underscore the rationality and generality of the assumptions, making them a natural choice for our analysis.

\section{Details and Discussion about Post-processing} \label{app:post}
For comprehensive clarity, we provide a clear process about the post-processing algorithm in Alg. \ref{alg:post}.

\textbf{Discussion. }
It is worth noting that our design in post-processing is as conservative as possible. In fact, we simply adhere to the conventions in deep learning (i.e., thresholding) to obtain the skeleton and the initial v-structure set. Subsequently, we follow the conventions in constraint-based causal discovery methods to derive the final oriented edges. Therefore, we have not dedicated extensive efforts towards the meticulous design, nor do we intend to emphasize this aspect of our workflow.

The conflicts and cycles are not unique to SiCL; they are, in fact, common issues encountered by all constraint-based algorithms like PC. Moreover, it's worth noting that they never appear if the networks perform perfect. Therefore, the conflict resolving of v-structures and the removal of cycles are designed as fallback mechanisms to ensure the soundness of our workflow, rather than being central elements of our approach. To illustrate it, we experimented with an opposite variant (Intuitively, this is a bad choice) that prioritizes discarding the v-structure with the higher predicted probability. The minimal differences in outcomes between this variant and its counterpart, as detailed in Tab. \ref{tab:crm}, support our viewpoint that the conflict resolution process is of limited significance within our workflow. On the other hand, experimental results presented in Tab. \ref{tab:ccfc} underscore the infrequency of cycles in the predictions, reinforcing the non-essential nature of the cycle removal component.

\begin{table}[tb]
\centering
\begin{threeparttable}
\caption{The o-F1 comparison between the used conflict resolving method with an opposite variant.}
\label{tab:crm}
\begin{tabular}{@{}ccc@{}}
\toprule
Conflict Resolving Method & WS-L-G & SBM-L-G  \\
\midrule
Original Conflict Resolving & $\mathbf{41.1}$&$\mathbf{83.3}$ \\
Opposite Conflict Resolving & $40.7$ & $83.2$ \\
\bottomrule
\end{tabular}
\end{threeparttable}
\end{table}

\color{black}
\section{Details about Node Feature Encoder} \label{sec:dnf}
Motivated by previous approaches \citep{lorchamortized,ke2023learning}, we employ a transformer-like architecture comprising attention layers over either the observation dimension or the node dimension alternately as the node feature encoder.
Concretely, for the raw node features $\mathcal{F} \in \mathbb{R}^{d \times n \times h}$ corresponding to $d$ nodes and $n$ observations, our goal is to capture the correlations between both different nodes and different observations.
Therefore, we utilize two transformer encoder layers over the observation dimension and the node dimension alternatively:
\begin{align}
\begin{aligned}
    \mathcal{F} & \leftarrow TransformerEncoderLayer(\mathcal{F}, \mathcal{F}, \mathcal{F}) \\
    \mathcal{F} & \leftarrow \mathcal{F}.transpose(0, 1) \\
        \mathcal{F} & \leftarrow TransformerEncoderLayer(\mathcal{F}, \mathcal{F}, \mathcal{F})\\
    \mathcal{F} & \leftarrow \mathcal{F}.transpose(0, 1). \\
\end{aligned}
\end{align}
The above operation is repeated multiple times for sufficiently feature encoding.
It yields the final node feature tensor $\mathcal{F} \in \mathcal{R}^{d \times \times h}$.
\color{black}
\section{Illustration of the Case Study in Sec. \ref{sec:met:lim}}
Fig. \ref{fig:ps} presents an illustration for the case study of the Node-Edge approach in Sec. \ref{sec:met:lim}. 
It clearly shows that observational data with the two different parametrized forms follow the same joint distribution:
\begin{align}
    P(\left[X, Y, T\right]) =\mathcal{N}\left([0,0,0],\left[\begin{array}{lll}1 & 1 & 1 \\ 1 & 3 & 2 \\ 1 & 2 & 2\end{array}\right]\right).
\end{align}
Therefore, the observational datasets coming from the two DAGs are inherently indistinguishable.

\begin{figure*}[ht]
    \centering
    \includegraphics[width=0.9\linewidth]{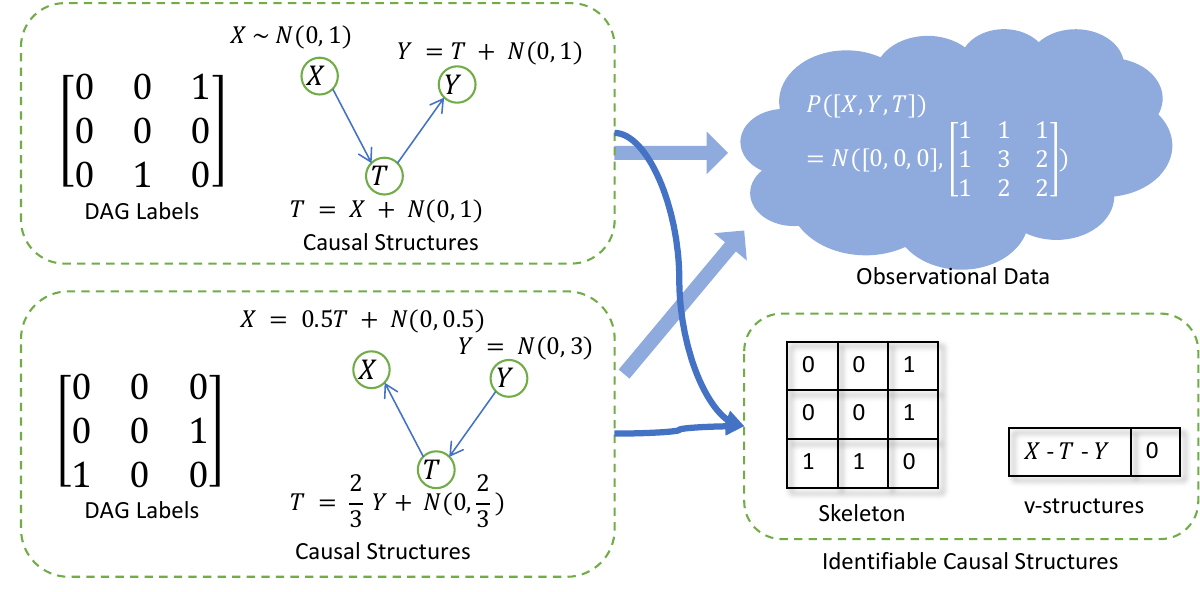}
    \caption{The problem setting to emphasize the limitations of the Node-Edge approach. \textit{Best viewed in color.}}
    \label{fig:ps}
\end{figure*}

\section{Proof and Discussion for Proposition \ref{prop:star}} \label{sec:mgcs}
\color{black}
We first restate the Proposition \ref{prop:star} with more details and provide the proof.
\begin{Proposition}
Let $\mathcal{G}_n$ be the set of graphs with $n+1$ nodes where there is a central node $y$ such that (1) every other node is connected to $y$, (2) there is no edge between the other nodes, (3) there is at most one edge pointing to $y$. 
For any distribution $Q$ over $\mathcal{G}_n$, let $M(Q)$ be another distribution over $\mathcal{G}_n$ such that for any causal edges $e, e'$, $P_{G\sim Q}(e \in G) = P_{G\sim M(Q)}(e \in G) = P_{G\sim M(Q)}(e \in G | e' \in G)$. We have 
\begin{align}\max_{Q} P_{G \sim M(Q)}(G \nin \mathcal{G}_n) = 
1 - \frac{2n-1}{n-1}(1 - \frac{1}{n})^n.
\end{align}
As a corollary, we have 
\begin{align}
\sup_n \max_{Q} P_{G \sim M(Q)}(G \nin \mathcal{G}_n) = 
1 - \frac{2}{e} \approx 0.2642,
\end{align}
\end{Proposition}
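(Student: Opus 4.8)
The plan is to translate the statement into an optimization over edge-orientation probabilities and then show the optimum is the uniform one. Since every graph in $\mathcal{G}_n$ has the same (star) skeleton and no edges among the peripheral nodes, the only degree of freedom is the orientation of each of the $n$ edges, and constraint (3) is the only one a marginal resampling can break. Writing $q_i$ for the marginal probability that edge $i$ is oriented into $y$, the events ``edge $i$ points into $y$'' are mutually exclusive under $Q$ (at most one edge enters $y$), so $\sum_i q_i \le 1$; conversely every such $q$ is realized by some $Q$ on $\mathcal{G}_n$. Under $M(Q)$ the $n$ orientations become independent, so the number $N$ of edges pointing into $y$ is a sum of independent $\mathrm{Bernoulli}(q_i)$ variables, and $G \notin \mathcal{G}_n$ exactly when $N \ge 2$. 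Thus the inner maximum equals $\max\{\,P(N \ge 2) : q_i \ge 0,\ \sum_i q_i \le 1\,\}$, and it remains to show this value is $1 - \frac{2n-1}{n-1}(1-\frac1n)^n$, attained at $q_i = 1/n$.

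Next I would minimize the complementary quantity $g(q) = P(N \le 1) = \prod_i (1-q_i) + \sum_i q_i \prod_{j\ne i}(1-q_j)$, since the objective is $1-g$. A one-line computation gives $\partial g/\partial q_i = -P(N_{-i} = 1) \le 0$, where $N_{-i}$ omits edge $i$; hence $g$ is non-increasing in each coordinate and its minimum lies on the face $\sum_i q_i = 1$. On this simplex I would pin down the minimizer by pairwise symmetrization: fixing every coordinate except $q_i,q_j$ and their sum, multilinearity of $g$ yields the clean quadratic $g = C + \big(P(N_{-ij}=0) - P(N_{-ij}=1)\big)\,t^2$, where $t = (q_i-q_j)/2$ and $N_{-ij}$ omits both edges. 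Equalizing a pair therefore lowers $g$ precisely when $P(N_{-ij}=0) \ge P(N_{-ij}=1)$.

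The main obstacle is that this favorable-sign condition is \emph{not} universal on the simplex (it fails when the mass on the complementary edges is large), so a plain Schur-convexity argument does not close the proof directly. To finish I would instead appeal to the KKT conditions: at an interior stationary point all $P(N_{-i}=1)$ coincide, and rewriting $P(N_{-i}=1)$ through the common product $\prod_l(1-q_l)$ shows each $q_i$ must solve one and the same quadratic, so the coordinates take at most two distinct values. Boundary faces with some $q_i = 0$ reduce to the identical problem in fewer variables, so induction on $n$ together with a direct comparison of the finitely many two-value configurations against the uniform allocation identifies $q_i = 1/n$ as the global minimizer. Substituting it gives $g = (1-\frac1n)^n + (1-\frac1n)^{n-1}$, i.e. the objective value $1 - \frac{2n-1}{n-1}(1-\frac1n)^n$.

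Finally I would take the supremum over $n$. Setting $b_n = \frac{2n-1}{n-1}(1-\frac1n)^n$ and using the standard facts $(1-\frac1n)^n \uparrow 1/e$ and $\frac{2n-1}{n-1} = 2 + \frac1{n-1} \downarrow 2$, a short monotonicity check shows $b_n$ decreases to $2/e$, so $1 - b_n$ increases to its supremum $1 - 2/e \approx 0.2642$. The crux of the whole argument is the global optimality of the uniform allocation in the third paragraph; the reduction and the limit are comparatively routine.
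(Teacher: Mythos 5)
Your proposal is correct and, at its core, it travels the same road as the paper's proof: both reduce the problem to optimizing the probability of at most one edge pointing into $y$ over the marginal orientation probabilities (your $q_i$ is exactly $1-Q_i$ in the paper's notation, and your feasible set $\sum_i q_i \le 1$ is the paper's $\sum_i Q_i \ge n-1$), both locate the optimum at the uniform allocation $q_i = 1/n$ (the paper's $Q_i = \frac{n-1}{n}$), and both conclude with the same value $1-\frac{2n-1}{n-1}(1-\frac{1}{n})^n$ and the limit $1-\frac{2}{e}$. The differences lie in the endgame, and they largely favor your version. The paper takes logs and runs a Lagrange analysis, then asserts that its stationarity function $h(Q_i)=\frac{1}{Q_i}\left(1-\frac{1}{1+AQ_i}\right)$ is monotone and hence all coordinates coincide; but the paper's $A=1-n+\sum_{k\ne i}\frac{1}{Q_k}$ depends on $i$, and once this is accounted for the stationarity condition is a quadratic (in each coordinate, with symmetric coefficients) admitting up to two roots --- precisely the two-value phenomenon you isolate via KKT. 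So you have correctly identified, and proposed a repair for, the one step the paper glosses over; likewise, your coordinate-monotonicity reduction to the face $\sum_i q_i = 1$ via $\partial g/\partial q_i = -P(N_{-i}=1)\le 0$ is cleaner than the paper's case split on $\lambda=0$ versus $\lambda\ne 0$, your induction on the faces $q_i=0$ systematically covers boundary configurations the paper only spot-checks, and your observation that plain symmetrization fails (the sign condition $P(N_{-ij}=0)\ge P(N_{-ij}=1)$ breaks when the complementary mass is concentrated) is a genuine insight absent from the paper. The one place your plan remains a sketch is the final comparison of the two-value stationary configurations against the uniform one: that is the real residual work and should be executed explicitly (note that your induction implicitly also needs $b_{n-1}>b_n$ for $b_n=\frac{2n-1}{n-1}(1-\frac{1}{n})^n$, i.e., exactly the monotonicity you verify for the supremum step, so the pieces fit together) --- but since the paper's own proof does not carry out this exclusion either, your proposal is, if anything, the more rigorous of the two.
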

\begin{proof}
Denote other nodes except for the central node as $x_i$ where $i \in \left\{1, 2, \dots, n\right\}$. 
In our setting, the set $\mathcal{G}_n$ contains $n + 1$ DAGs
with the same skeleton and no v-structure
: $G_0: y \rightarrow x_i$ for all $x_i$, and $G_i: y \rightarrow x_j$ for all $x_j \neq x_i$ together with $x_i \rightarrow y$.
Denote the sampling probability of DAG $G_i$ from $\mathcal{G}_n$ as $P_i$.
Therefore, the marginal probability of the edge $y \rightarrow x_i$ is $1 - P_i$.

    % As the Node-Edge model $M$ is trained optimally, the prediction of the existence probability of the edge $y \rightarrow x_i$ is $1 - P_i$.

    If $\exists i$, $P_i = 1$, it means that $\mathcal{G}_n$ only contains the DAG $G_i$. Therefore, $M(Q)$ is equivalent to $Q$ and we have $P_{G \sim M(Q)}(G \nin \mathcal{G}_n) = 0$.

    If $\forall i$, $P_i < 1$, denoting $Q_i = 1 - P_i$ and $P(v)$ as the probability of $G$ containing no v-structures. In other words, $P(v) = P_{G \sim M(Q)}(G \in \mathcal{G}_n)$. We have 
    \begin{align}
        P(v) = \prod_{i=1}^n Q_i + \sum_{j=1}^n \frac{\prod_i^n Q_i}{Q_j} (1 - Q_j)
        &= ( \prod_{i=1}^n Q_i) \cdot (1 + \sum_{j=1}^{n} \frac{1-Q_j}{Q_j}).
    \end{align}
    As $P(v)$ is a probability, we have $P(v) > 0$. Denoting function 
    \begin{align}
    f(Q_1, Q_2, \dots, Q_n) = \log P(v) = \sum_{i=1}^n \log Q_i + \log (1 + \sum_{j=1}^n \frac{1-Q_j}{Q_j}),
    \end{align} we would like to find its minimum s.t. $\sum_i Q_i \geq n - 1$ and $Q_i \in (0, 1]$.

    Define its Lagrange function \begin{align}
        L(Q_1, Q_2, \dots, Q_n, \lambda) = f + \lambda (n-1-\sum_i Q_i).
    \end{align}

    We have 
    \begin{align}
        \frac{\partial L}{\partial \lambda} = n - 1 - \sum_i Q_i,
    \end{align}
    and 
    \begin{align}
        \frac{\partial L}{\partial Q_i} = \frac{1}{Q_i}(1 - \frac{1}{Q_i(1-n + \sum_{k=1}^{n}\frac{1}{Q_k})}) - \lambda.
    \end{align}

    Now we are going to find the extremums for $L(Q_1, Q_2, \dots, Q_n, \lambda)$.
    \begin{enumerate}
        \item[(1)] If $\lambda = 0$, we have $\forall i$, $\frac{\partial f}{\partial Q_i} = 0$, then
        \begin{align}
            \forall i, Q_i = \frac{1}{(1 - n + \sum_{k=1}^{n} \frac{1}{Q_k})}.
        \end{align}
        It indicates that $\forall i, Q_i = 1$, hence $f = 0$ and $P(v) = 1$.
        \item[(2)] If $\lambda \neq 0$, $\exists i$, we have $\forall i$, $\frac{\partial f}{\partial Q_i} = \lambda$ and $\sum_{i=1}^n = n - 1$. In other words, we have 
        \begin{align}
            \forall i, j, \frac{\partial f}{\partial Q_i} = \frac{\partial f}{\partial Q_j} = \lambda.
        \end{align}
        Define function 
        \begin{align}
        h(Q_i) = \frac{\partial f}{\partial Q_i} = \frac{1}{Q_i}(1 - \frac{1}{Q_i(1-n + \sum_{k=1}^{n}\frac{1}{Q_k})}).
        \end{align}
        we can rewrite the function as 
        \begin{align}
            h(Q_i) = \frac{1}{Q_i}(1 - \frac{1}{1 + AQ_i}),
        \end{align}
        where $A = 1 - n + \sum_{k\neq i} \frac{1}{Q_k} \geq 1 - n + \frac{(n-1)^2}{n-1-Q_i} > 0$.
        Therefore, $h(x)$ is a monotonic function in its domain. 

        It indicates that $\forall i, j$, $Q_i = Q_j = \frac{n-1}{n}$, where $P(v) = \frac{2n-1}{n-1}(1 - \frac{1}{n})^n$.
    \end{enumerate}
    Now we are going to list the boundary points for $f$.
    \begin{enumerate}
        \item[(1)] $\forall i$, $Q_i = 1$, it becomes the first extremum point.
        \item[(2)] $\exists i$, $Q_i$ is approaching to $0$. Due to the constraint of $\sum Q_i \geq n - 1$, other $Q$s are approaching to $1$. We have $\lim_{Q_i \rightarrow 0} f = 0$ and $P(v) = 1$.
    \end{enumerate}
    In conclusion, the maximum point of function $f$ is $\forall i$, $Q_i = \frac{n - 1}{n}$, where 
    \begin{align}
        P(v) = \frac{2n-1}{n-1}(1 - \frac{1}{n})^n,
    \end{align}
    and 
        \begin{align}
        P_{G \sim M(Q)}(G \nin \mathcal{G}_n) = 1- P(v) = 1 - \frac{2n-1}{n-1}(1 - \frac{1}{n})^n.
    \end{align}
\end{proof}
\textbf{Discussion. }It is worth noting that $\mathcal{G}_n$ is exactly the MEC of any graph in $\mathcal{G}_n$.
Hence, $P_{G \sim M(Q)}(G \nin \mathcal{G}_n)$ represents the probability that the graph sampled from $M(Q)$ is incorrect.
It indicates that a Node-Edge model could suffer from an inevitable error rate of $0.2642$ though has been perfectly trained to predict $M(Q)$.
\color{black}

\section{Experimental Settings} \label{sec:app:exp:set}
\color{black}
\textbf{Baselines.} To demonstrate the effectiveness and superiority of the proposed framework, 
several representative baselines from multiple categories are selected for comparison. 
The PC algorithm is a classic constraint-based causal discovery algorithm based on conditional independence tests, and the version with parallelized optimization is selected \citep{le2016fast}.
GES, a classic score-based greedy equivalence search algorithm, is also included \citep{chickering2002optimal}.
For continuous optimization methods, we compare with NOTEARS \citep{zheng2018dags}, a representative gradient-based optimization method, and GOLEM \citep{ng2020role}, regarded as a more efficient variant of NOTEARS. 
For neural network based optimization algorithms, we compare with DAG-GNN \citep{yu2019dag}, an optimization algorithm based on graph neural networks, and GRAN-DAG, a gradient-based algorithm using neural network modeling \citep{Lachapelle2020Gradient-Based}.
For DNN-based SCL methods, we compare with AVICI, which is the most related work to ours and regarded as the current state-of-the-art method \citep{lorchamortized}.
\color{black}

\textbf{Implementation Details. }The implementation from gCastle \citep{zhang2021gcastle} is utilized for baselines except the SCL methods (i.e., SLdisco and AVICI). 
For PC algorithm, we employ the Fisher-Z transformation with a significance threshold of $0.05$ for conditional independence tests, which is a prevalent choice in statistical analyses and current PC implementations \citep{zhang2021gcastle,zheng2024causal}.
Our criterion for graph selection in GES experiments is the Gaussian Bayesian Information Criterion (BIC), specifically the $l_\infty$-penalized Gaussian likelihood score. It is used in the original paper \citep{chickering2002optimal}, and remains a favored variant in the literature.
For NOTEARS, adhering to the official implementation's settings, we configure NOTEARS with a maximum of 100 dual ascent steps, and an edge dropping threshold of 0.3. For hyperparameters lacking specific default settings, such as the L1 penalty and loss function type, we default to settings used by gCastle \cite{zhang2021gcastle}, employing an L1 penalty of 0.1 and an L2 loss function.
For DAG-GNN, we utilize hyperparameter settings directly from the original implementation, ensuring consistency with established benchmarks.
For GOLEM and GRAN-DAG, we also use the default setting of gCastle \citep{zhang2021gcastle}.
\color{black}
Note that the CSIvA model \citep{ke2023learning} is also a closely related method, but it is not compared due to the unavailability of its relevant codes and its requirement for interventional data as input. 
The original implementation of SLdisco \cite{petersen2023causal} was developed in R. To enhance compatibility with our data generation and evaluation workflows, we reimplemented the model using PyTorch.
The original AVICI model \citep{lorchamortized} does not support discrete data.
Therefore, we use an embedding layer to replace its first linear layer when using AVICI on discrete data.

\textbf{Synthetic Data.} We randomly generate random graphs from multiple random graph models. For continuous data, following previous work \citep{lorchamortized}, Erdős-Rényi (ER) and Scale-free (SF) are utilized as the training graph distribution $p(G)$.
The degree of training graphs in our experiments varies randomly among 1, 2, and 3.
\textcolor{black}{For testing graph distributions, Watts-Strogatz (WS) and Stochastic Block Model (SBM) are used, with parameters consistent with those in the previous paper \citep{lorchamortized}. }
All synthetic graphs for continuous data contain 30 nodes.
\textcolor{black}{The lattice dimension of Watts-Strogatz (WS) graphs is sampled from $\{2, 3\}$, yielding an average degree of about $4.92$. The average degrees of Stochastic Block Model (SBM) graphs are set at 2, following the settings in the aforementioned paper.}
For discrete data, 11-node graphs are used.
SF is utilized as the training graph distribution $p(G)$ and ER is used for testing.
\textcolor{black}{The synthetic training data is generated in real-time, and the training process does not use the same data repeatedly.}
All synthetic test datasets contain 100 graphs, and the average values of the metrics on the 100 graphs are reported to comprehensively reflect the performance.

For the forward sampling process from graph to continuous data, both the linear Gaussian mechanism and general nonlinear mechanism are applied.
Concretely, the Random Fourier Function mechanism is used for the general nonlinear data following the previous paper \citep{lorchamortized}.
% Therefore, it yields two types of continuous datasets. 
In synthesizing discrete datasets, the Bernoulli distribution is used following previous papers \citep{dai2023ml4c,ma2022ml4s}.

\begin{figure*}
    \centering
    \includegraphics[width=\linewidth]{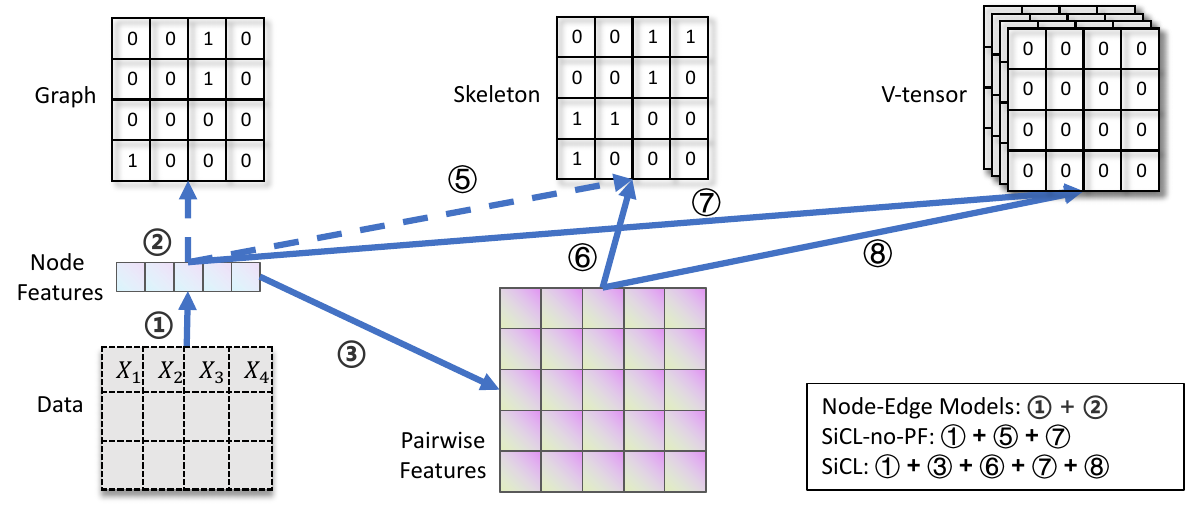}
    \caption{Illustration of the architecture comparison of Node-Edge models, SiCL-no-PF and SiCL.}
    \label{fig:abl}
\end{figure*}

\textbf{More Implementation Details and Computational Resources. } The two network modules, i.e., the SPN and VPN, are optimized by Adam optimizer with default hyperparameters. Following previous work \citep{lorchamortized}, the training batch size is set as 20. 
All classic algorithms are run on an AMD EPYC 7V13 CPU, and DNN-based methods are run on Nvidia 1080Ti, A40 and A100 GPUs. 
Training SiCL on a 30-node training set with batch size 20 needs about 60GB memory, and training on a 11-node training set needs about 20GB memory.
The learning rate is $3 \times 10^{-4}$.
The batch size is $15$, and the DNN models are trained for $1.2 \times 10^{5}$ batches by default.
\color{black}

\begin{table*}[t]
\centering
    % \resizebox{\columnwidth}{!}{%
\begin{threeparttable}
\caption{\textbf{General comparison of SiCL and other methods}. The average performance results in three runs are provided for SiCL method. GES takes more than 24 hours per graph on WS-L-G, and SLdicso is
unsuitable on non-linear-Gaussian data, hence the results are not included.}
\label{tab:epder}
\begin{tabular}{@{}ccccccccc@{}}
\toprule
\multirow{2}{*}{Dataset} &\multirow{2}{*}{Method} & \multicolumn{4}{c}{Skeleton Prediction }& \multicolumn{3}{c}{CPDAG Prediction} \\
 && s-F1$\uparrow$ & s-Acc.$\uparrow$ & s-AUC$\uparrow$ & s-AUPRC$\uparrow$ &  v-F1$\uparrow$ & o-F1$\uparrow$ & SHD$\downarrow$\\
 \midrule
\multirow{8}{*}{WS-L-G}&PC & $30.4 $&$ 65.6$&N/A&N/A&$ 15.6$&$16.0 $ &$170.4 $\\
% &GES & * &* &*&*&*&*&*\\
&NOTEARS & $33.3 $  & $ 65.1$&N/A&N/A&$ 27.9$&$31.5$ & $159.8$ \\
&DAG-GNN & $35.5$  & $ 55.4$&N/A&N/A&$32.2$&$32.7$ & $193.7 $ \\
&GRAN-DAG& $16.6$ & $62.1$ & N/A & N/A & $11.7$ & $11.7$ & $170.1$ \\
% &NOTEARS-MLP & $24.6$ & $60.3$ & N/A & N/A & $12.2$ & $11.8$ & $190.0$ \\
&GOLEM& $30.0$ & $63.4$ & N/A & N/A & $15.8$ & $19.3$ & $172.7$ \\
&SLdisco & $0.1$ & $66.0$ & $50.2$ & $34.6$ & $0.0$ & $0.1$ & $147.9$ \\
&AVICI & $39.9 $  & $ 74.0$ &$ 71.5$&$ 62.2$&$ 28.2$&$ 35.8$&$119.2$\\
&SiCL & $\mathbf{44.7} $ & $\mathbf{75.3} $&$\mathbf{73.7} $&$\mathbf{65.4} $&$\mathbf{32.0} $&$\mathbf{38.5} $&$\mathbf{116.1} $\\
 \midrule
\multirow{9}{*}{SBM-L-G}&PC & $58.8$&$90.0$&N/A&N/A &$34.8$&$35.9$&$56.4$\\
&GES & $70.8$&$89.4$ &N/A&N/A&$53.9$&$55.0$&$60.3$\\
&NOTEARS & $80.1$  & $94.5$&N/A&N/A&$76.2$&$77.8$ & $26.7$ \\
&DAG-GNN & $66.2$  & $87.4$&N/A&N/A&$60.3$&$62.5$ & $61.0$ \\
&GRAN-GAG & $22.6$  & $85.9$&N/A&N/A&$13.8$&$14.4$ & $64.7$ \\
% &NOTEARS-MLP & $ 44.7$ & $75.4 $ & N/A & N/A & $ 37.9$ & $39.0 $ & $112.5 $ \\
&GOLEM & $68.5$ &$88.5$ &N/A&N/A&$63.5$&$65.2$&$55.1$\\
&SLdisco & $1.9$ & $85.7$ & $56.3$ & $17.6$ & $0.9$ & $1.2$ & $62.6$ \\
&AVICI & $84.3$  & $96.2$ &$98.1$&$92.7$&$79.1$&$81.6$&$17.7$\\
&SiCL &  $\mathbf{85.8}  $  & $\mathbf{96.4}  $& $\mathbf{98.3}$& $\mathbf{93.4} $&$\mathbf{80.6}  $&$\mathbf{82.7}  $&$\mathbf{17.1}  $\\
\midrule
\multirow{9}{*}{WS-RFF-G}&PC & $36.1 $&$69.9$&N/A&N/A &$ 14.8$&$16.1$&$ 156.9$\\
&GES & $ 41.7$&$66.6$ &N/A&N/A&$21.1 $&$23.6$&$174.1$\\
&NOTEARS & $37.7 $  & $64.6 $& N/A & N/A &$30.9 $&$33.4 $ & $164.4 $ \\
&DAG-GNN & $33.2 $  & $ 65.4$&N/A&N/A&$27.0 $&$28.9 $ & $161.1 $ \\
&GRAN-DAG & $4.7 $  & $ 66.7$&N/A&N/A&$0.8 $&$1.1 $ & $146.9 $ \\
% &NOTEARS-MLP & $52.7 $ & $ 40.2$ & N/A & N/A & $ 44.2$ & $ 47.7$ & $282.8 $ \\
 &GOLEM & $27.6$ &$62.4$ &N/A&N/A&$13.8$&$17.7$&$175.8$\\
&AVICI & $47.7 $  & $75.9$ &$ 76.3$&$ 67.6$&$38.7$&$45.2 $&$110.6 $\\
&SiCL & $\mathbf{51.8}  $  & $ \mathbf{77.4}$ &$\mathbf{81.1}$&$ \mathbf{72.9}$&$ \mathbf{40.3}$&$ \mathbf{46.3}$&$ \mathbf{107.0} $\\
\midrule
\multirow{9}{*}{SBM-RFF-G}&PC & $57.5$&$89.3$&N/A&N/A &$32.7$&$ 34.2$&$60.9$\\
&GES & $56.5 $&$84.9$ &N/A&N/A&$37.0$&$38.0$&$82.4$\\
&NOTEARS & $55.6 $  & $86.2 $&N/A&N/A&$ 46.5$&$ 48.5$ & $66.3 $ \\
&DAG-GNN & $ 47.1$  & $82.1 $&N/A&N/A&$39.0$&$40.6$ & $86.2 $ \\
&GRAN-DAG & $17.4$ &$87.4$ & N/A& N/A &$3.2 $&$3.8$&$58.2$\\
% &NOTEARS-MLP & $48.2$ & $71.0$ & N/A & N/A & $41.0$ & $43.5$ & $130.5$ \\
&GOLEM & $31.1$ &$75.7$ & N/A& N/A &$23.0 $&$24.8$&$112.0$\\
&AVICI & $76.6$  & $ 94.5$ &$ 95.4$&$85.7 $&$69.3$&$72.7 $&$ 27.2$\\
&SiCL & $ \mathbf{82.1}$  & $ \mathbf{95.7}$ &$ \mathbf{97.1}$&$ \mathbf{90.7} $&$ \mathbf{75.7} $&$ \mathbf{78.0}$&$ \mathbf{21.9} $\\
\midrule
\multirow{8}{*}{ER-CPT-MC}&PC & $82.2$&$83.0$ &N/A&N/A&$39.2$&$40.6$&$16.4$\\
&GES & $ 82.1$&$81.8$ &N/A&N/A&$40.4$&$42.4$&$17.1$\\
&NOTEARS & $16.7$  & $74.8$&N/A&N/A&$0.2$&$0.6$& $16.1$ \\
&DAG-GNN & $ 24.8$  &  $73.5$&N/A&N/A&$ 3.4$&$3.7 $ & $15.9 $ \\
&GRAN-DAG & $40.8$ &$77.0$ & N/A& N/A &$6.8 $&$7.3$&$15.6$\\
% &NOTEARS-MLP & $ $ & $ $ & N/A & N/A & $ $ & $ $ & $ $ \\
&GOLEM& $37.6$ & $66.4$ & N/A & N/A & $4.6$ & $9.3$ & $21.9$ \\
&AVICI & $76.9$  & $88.4$ & $93.5 $& $87.9 $&$56.6$&$57.6$&$10.2$\\
&SiCL &  $\mathbf{84.2}$  & $\mathbf{90.1}$&$ \mathbf{96.6}$& $\mathbf{94.0} $& $ \mathbf{58.3}$&$\mathbf{59.9}$&$\mathbf{10.1}$\\
\bottomrule
\end{tabular}
\end{threeparttable}
% }
\end{table*}

\begin{table*}[tb]
    \centering
    % \resizebox{\linewidth}{!}{%
\begin{threeparttable}
\caption{Full ablation study results.}
\label{tab:fcplg}
\begin{tabular}{ccccccccc}
\toprule
Dataset &Method & s-F1$\uparrow$ & s-Acc.$\uparrow$ & s-AUC$\uparrow$ & s-AUPRC$\uparrow$ &  v-F1$\uparrow$ & o-F1$\uparrow$ & SHD$\downarrow$\\
 \midrule
\multirow{3}{*}{WS-L-G}& SiCL-Node-Edge & $39.9 $&$ 74.0$&$71.5$&$62.2$&$ 28.2$&$35.8 $ &$119.2 $\\
&SiCL-no-PF & $ 42.4$  & $ 74.4$ &$ 72.8$&$ 63.5$&$ 30.5$&$ 37.9$ &$118.4 $\\
&SiCL & $\mathbf{44.7} $ & $\mathbf{75.3} $&$\mathbf{73.7} $&$\mathbf{65.4} $&$\mathbf{32.0} $&$\mathbf{38.5} $&$\mathbf{116.1} $\\ \hline
\multirow{3}{*}{SBM-L-G}& SiCL-Node-Edge & $ 84.3$&$ 96.2$&$98.1$&$92.7$&$ 79.1$&$81.6 $ &$17.7 $\\
&SiCL-No-PF & $85.5$  & $\mathbf{96.4}$ &$\mathbf{98.3}$&$93.3$&$79.4$&$82.2$&$17.3$\\
&SiCL &  $\mathbf{85.8}$  & $\mathbf{96.4} $& $\mathbf{98.3} $& $\mathbf{93.4} $&$\mathbf{80.6}  $&$\mathbf{82.7}  $&$\mathbf{17.1}$\\
\bottomrule
\end{tabular}
\end{threeparttable}
% }
\end{table*}
\section{Extra Experimental Results} \label{sec:app:exp:e}

\begin{figure*}[t]
     \centering
     \begin{subfigure}[b]{0.45\textwidth}
         \centering
         \includegraphics[width=\textwidth]{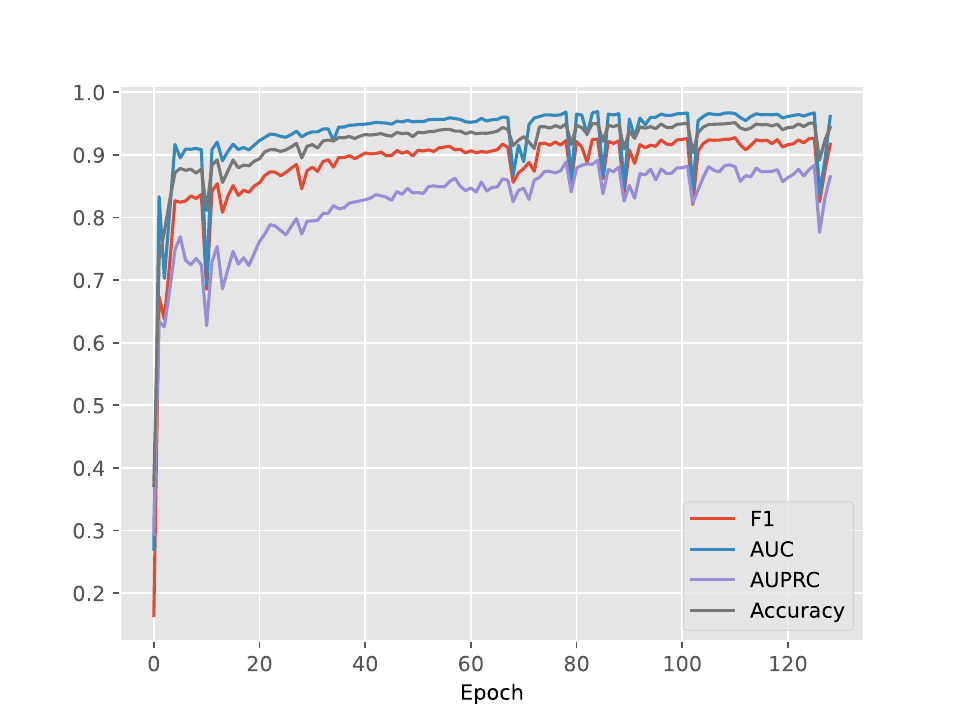}
         \caption{WS-LG}
         \label{fig:vws}
     \end{subfigure}
     \hfill
     \begin{subfigure}[b]{0.45\textwidth}
         \centering
         \includegraphics[width=\textwidth]{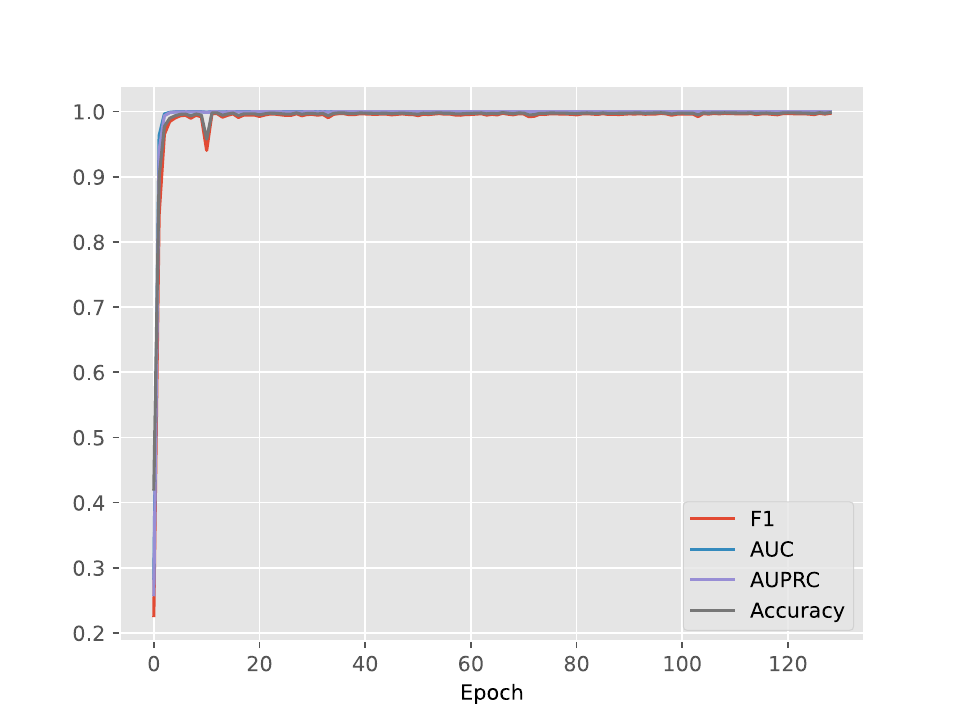}
         \caption{SBM-LG}
         \label{fig:vsbm}
     \end{subfigure}
     \caption{Variation trends of the test performance of the V-structure Prediction Network on WS-LG and SBM-LG during training.}
        \label{fig:tcs}
\end{figure*}

\subsection{Effectiveness of V-structure Prediction Network} Fig. \ref{fig:tcs} illustrates the test performance trends of the v-structure prediction model on SBM and WS random graphs during the training process. In this model, the feature extractor $FE$ is fine-tuned from the skeleton prediction model. The performance increases rapidly and achieves a relatively high level after just a few initial epochs. This suggests that our v-structure prediction network is capable to predict v-structures, and indicates that the pre-trained pairwise features from the skeleton prediction model are both effective and generalizable.

\color{black}
\subsection{More Evidence on Effectiveness of Pairwise Representation}
To further support the effectiveness of using pairwise representation, we present additional experimental results on different training datasets and test datasets, including ER-L-G, SF-L-G, ER-RFF-G, and SF-RFF-G.
% other three test datasets. The graph distributions are Geometric Random Graphs (GRG) and Scale-free graphs with two different parameters (marked as SF1 and SF2). 
% The structural-equation distribution is random linear (L) and noise distribution is Gaussian distribution (G), following other experimental settings.
For models, we compare SiCL with a variant without pairwise representation, i.e., SiCL-no-PF.
% We also implement another variant as a baseline where the alternative attention in the encoder is removed and marked as SiCL-no-AA, and compare it with another variant without pariwise representation, i.e., SiCL-no-AA-no-PF.

The results are provided in Tab. \ref{tab:mpc}. Models with pairwise representation ourperform the corresponding baseline models under almost all comparisons, further verifying the effectiveness of using pairwise representation in models.

\begin{table}[t]\color{black}
\centering
% \resizebox{\linewidth}{!}{%
\begin{threeparttable}
\caption{\color{black}More performance comparison on the effectiveness of pairwise representation.}\label{sec:mpc}
\label{tab:mpc}
\begin{tabular}{ccccccc}
\toprule 
Training Dataset & Test Dataset & Method & s-F1$\uparrow$ & s-AUC$\uparrow$ & s-AUPRC$\uparrow$ & s-Acc.$\uparrow$ \\
\midrule
\multirow{8}{*}{ER-L-G} & \multirow{2}{*}{ER-L-G} & SiCL-no-PF & 75.7 &84.6  & 83.1 & 78.5\\
 &  & SiCL &  80.2 & 89.6 & 90.1 &82.3 \\ \cline{2-7}
 & \multirow{2}{*}{SF-L-G} & SiCL-no-PF & 74.9 &  92.5& 87.3 &84.1 \\
 &  & SiCL & 79.0 & 96.0 & 93.7 & 87.0\\ \cline{2-7}
  & \multirow{2}{*}{ER-RFF-G} & SiCL-no-PF & 49.5 & 60.5 & 49.1 &58.8 \\
 &  & SiCL & 51.0 & 67.0 & 57.6 & 65.2\\ \cline{2-7}
  & \multirow{2}{*}{SF-RFF-G} & SiCL-no-PF & 40.4 & 57.9 & 38.9 & 57.5\\
 &  & SiCL & 46.4 & 71.4 & 53.7 & 69.0\\ \hline
 \multirow{8}{*}{SF-L-G} & \multirow{2}{*}{ER-L-G} & SiCL-no-PF & 64.6 & 77.3 & 68.7 & 70.7\\
 &  & SiCL & 68.0 & 82.1 & 76.4 & 74.3\\ \cline{2-7}
 & \multirow{2}{*}{SF-L-G} & SiCL-no-PF & 88.5 & 96.7 & 95.0 & 91.2\\
 &  & SiCL & 89.7 & 97.9 & 97.0 & 92.4\\ \cline{2-7}
  & \multirow{2}{*}{ER-RFF-G} & SiCL-no-PF & 44.3 & 62.3 & 50.9 & 58.4\\
 &  & SiCL & 47.0 & 66.2 & 55.8 & 63.3\\ \cline{2-7}
  & \multirow{2}{*}{SF-RFF-G} & SiCL-no-PF & 48.1 & 71.6 & 53.9 & 65.8 \\
 &  & SiCL & 56.0 & 79.6 & 64.8 & 74.2\\ \hline
  \multirow{8}{*}{ER-RFF-G} & \multirow{2}{*}{ER-L-G} & SiCL-no-PF & 64.0 & 73.3 & 65.8 & 67.3 \\
 &  & SiCL & 72.0 & 82.0 & 81.1 & 75.2 \\ \cline{2-7}
 & \multirow{2}{*}{SF-L-G} & SiCL-no-PF & 58.1 & 79.0 & 66.8 & 72.8\\
 &  & SiCL & 70.1 & 88.0 & 83.6 & 80.9\\ \cline{2-7}
  & \multirow{2}{*}{ER-RFF-G} & SiCL-no-PF & 63.2 & 74.3 & 67.7 & 71.0\\
 &  & SiCL & 74.8 & 85.7 & 84.5 & 79.7\\ \cline{2-7}
  & \multirow{2}{*}{SF-RFF-G} & SiCL-no-PF & 56.3 & 78.3 & 65.9 & 75.0 \\
 &  & SiCL & 68.2 & 87.0 & 81.5 & 82.1 \\ \hline
  \multirow{8}{*}{SF-RFF-G} & \multirow{2}{*}{ER-L-G} & SiCL-no-PF & 60.3 &  71.2 & 58.7 & 64.7\\
 &  & SiCL & 65.6 & 78.0 & 72.5 & 70.5\\ \cline{2-7}
 & \multirow{2}{*}{SF-L-G} & SiCL-no-PF & 73.6 & 90.5 & 82.9 & 81.0\\
 &  & SiCL & 79.1 & 94.2 & 90.0 & 85.5\\ \cline{2-7}
  & \multirow{2}{*}{ER-RFF-G} & SiCL-no-PF & 57.7 & 71.4 & 60.7 & 66.8\\
 &  & SiCL & 67.2 & 80.5 & 75.8 & 73.9\\ \cline{2-7}
  & \multirow{2}{*}{SF-RFF-G} & SiCL-no-PF & 74.8 & 90.2 & 82.4 & 83.5\\
 &  & SiCL & 80.4 & 94.2 & 90.5 & 87.3\\ 
\bottomrule 
\end{tabular}
\end{threeparttable}
% }
\end{table}

\subsection{Additional Comparison on DAG Prediction}
We provide an additional comparison with the AVICI baseline on the DAG prediction task. Since SiCL predicts CPDAGs and does not directly produce DAG predictions, we corrected the DAG predictions from AVICI using the edge directions inferred from the CPDAGs predicted by SiCL. The results, summarized in the Table \ref{tab:acdp}, demonstrate that incorporating CPDAG-inferred edge directions improves the DAG prediction metrics. This further confirms the effectiveness and generality of our approach, even in tasks focused on DAG metrics. 
\begin{table}[]
    \centering
        \caption{Additional Comparison on DAG Prediction}
    \label{tab:acdp}
    \begin{threeparttable}
    \begin{tabular}{cccccc}
         \toprule  
         Method & Dataset &F1 Score$\uparrow$ & AUC$\uparrow$ & AUPRC$\uparrow$ & Acc.$\uparrow$ \\ \midrule 
         AVICI & \multirow{2}{*}{WS-L-G} &$38.4$ &$86.3$ &$57.7$ &$85.9$ \\
         SiCL-Corrected AVICI && $35.8$&$87.2$ &$60.5$ &$86.2$  \\
         AVICI & \multirow{2}{*}{SBM-L-G}& $78.1$& $95.8$&$80.5$ &$97.3$ \\
         SiCL-Corrected AVICI & &$81.3$&$98.7$ &$90.8$ & $97.8$ \\
         \bottomrule
    \end{tabular}
        \end{threeparttable}
\end{table}

\subsection{Comparison with Autoregressive models on Inference Time Costs} \label{sec:auto}
To validate that the autoregressive models have a relatively high time costs due to the quadratic number of inference runs w.r.t. number of variables, we reproduce the network architecture of a representative autoregressive model, i.e., CSIvA \citep{ke2023learning}, and compare SiCL with it.
We use the same random input for both the models with increasing number of variables.
The results are provided in Fig. \ref{fig:itc}.
The time costs of the autoregressive model show a fast increasing trend and are much more than costs of SiCL, validating the correctness of our analysis.

\begin{figure}
    \centering
    \includegraphics[width=0.5\linewidth]{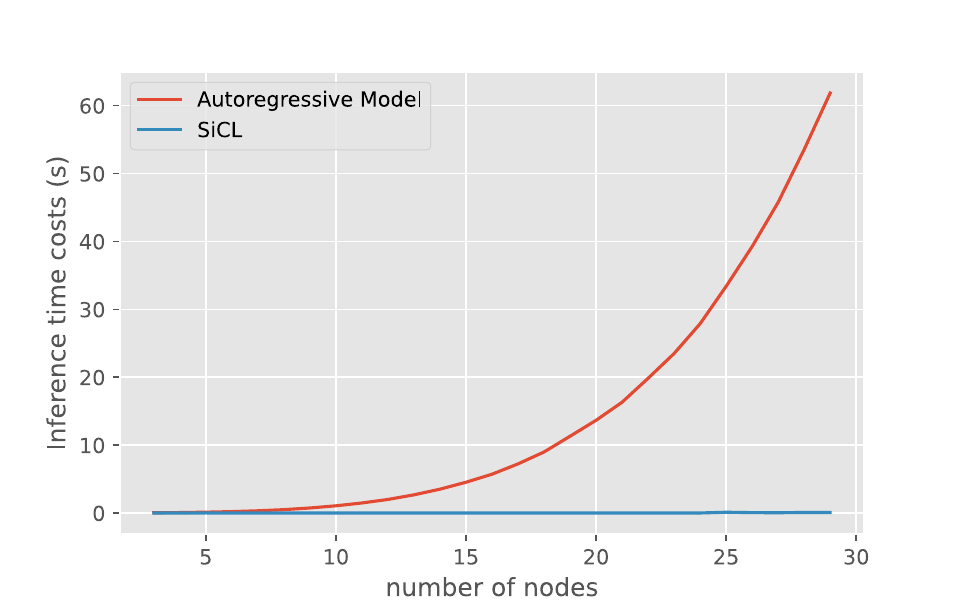}
    \caption{Comparison between an autoregressive model and SiCL on inference time costs.}
    \label{fig:itc}
\end{figure}

\color{black}
\subsection{Training Data Diversity and Model Generalization} We present experimental evidence that highlights the significant contribution of training data diversity to the model's generalization capabilities, even when applied to out-of-distribution (OOD) datasets. 
To illustrate this, we train one SiCL model on a combined dataset of both SF and ER, and another solely on the SF dataset. 
The comparative performance of these models is detailed in Tab. \ref{tab:trainood}. 
The model trained on the combined ER and SF datasets exhibited markedly better performance, not only on the ER dataset but also on the other two OOD datasets, with only a marginal decrease in performance on the SF dataset. 
These findings suggest that enhancing the diversity of the training data correspondingly improves the model’s ability to generalize and maintain robust performance across novel OOD datasets.

\begin{table*}[tb]\color{black}
    \centering
    \caption{\color{black}Comparison of SiCL models with different training data diversity on skeleton prediction.}
    \label{tab:trainood}
    \begin{subtable}{\linewidth}
      \centering
        \caption{\color{black}Model trained on both ER and SF}
        \begin{tabular}{lcccc}
            \toprule
            Test Dataset & s-F1$\uparrow$     & s-AUC$\uparrow$    & s-AUPRC$\uparrow$  & s-Acc.$\uparrow$    \\
            \midrule
            WS-L-G      & 36.3 & 70.6 & 60.6 & 73.3 \\
            SBM-L-G     & 78.1 & 96.8 & 88.1 & 94.8 \\
            ER-L-G      & 80.7 & 96.0 & 89.2 & 94.7 \\
            SF-L-G      & 84.7 & 98.5 & 93.6 & 95.5 \\
            \bottomrule
        \end{tabular}
    \end{subtable}%
    \\
    \begin{subtable}{\linewidth}
      \centering
        \caption{\color{black}Model trained on SF}
        \begin{tabular}{lcccc}
            \toprule
            Test Dataset & s-F1$\uparrow$     & s-AUC$\uparrow$    & s-AUPRC$\uparrow$  & s-Acc.$\uparrow$    \\
            \midrule
            WS-L-G      & 40.1 & 63.0 & 46.1 & 63.5 \\
            SBM-L-G     & 64.3 & 91.7 & 72.9 & 90.9 \\
            ER-L-G      & 67.1 & 90.4 & 73.9 & 90.8 \\
            SF-L-G      & 87.8 & 98.9 & 95.3 & 96.1 \\
            \bottomrule
        \end{tabular}
    \end{subtable}
\end{table*}

\subsection{Varying Amount of Training Graphs}
We present an analysis of how varying the amount of the training graphs influences performance on the skeleton prediction task. The results, depicted in Fig. \ref{fig:ts}, illustrate a clear trend: model performance improves in tandem with the expansion of the training dataset. This trend underscores the potential of our method to achieve even greater accuracy given a more extensive dataset.
\begin{figure*}[!ht]
     \centering
     \begin{subfigure}[b]{0.45\textwidth}
         \centering
    \includegraphics[width=\linewidth]{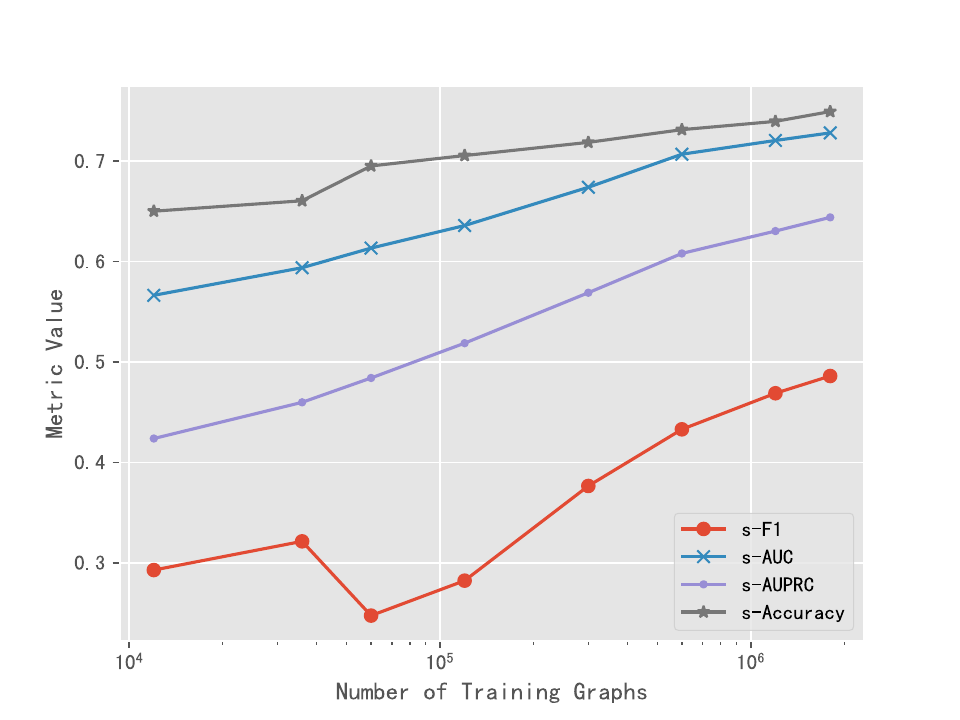}
         \caption{\color{black}WS dataset}
         \label{fig:ts1}
     \end{subfigure}
     \hfill
     \begin{subfigure}[b]{0.45\textwidth}
         \centering
    \includegraphics[width=\linewidth]{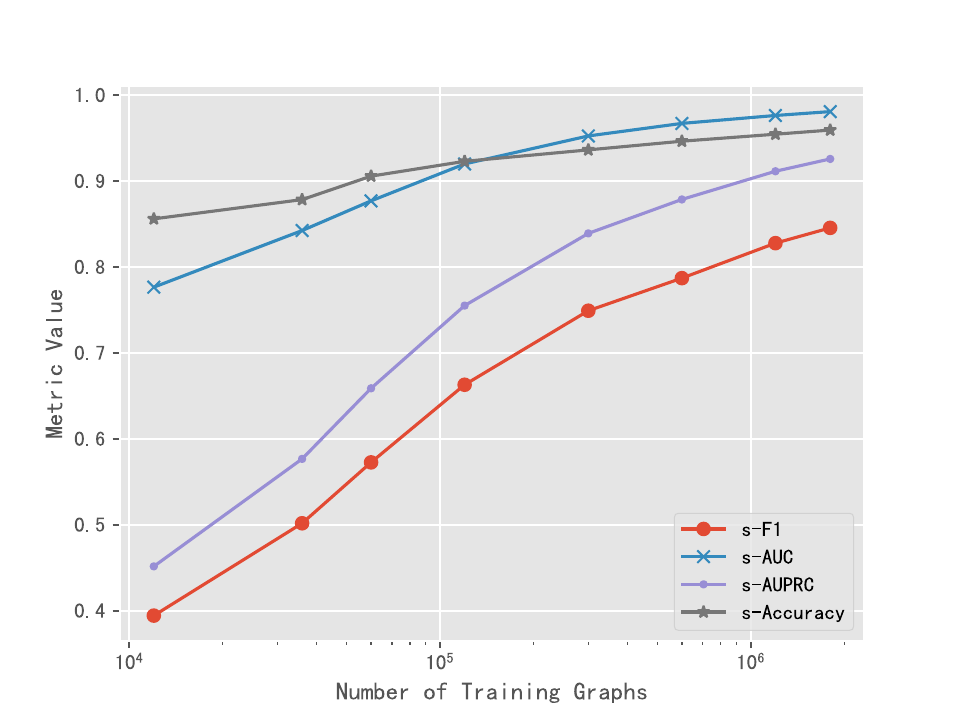}
         \caption{\color{black}SBM dataset}
         \label{fig:ts2}
     \end{subfigure}
         \caption{\color{black}Model performance with varying amount of training graphs.}
        \label{fig:ts}
\end{figure*}

\subsection{Varying Sample Size} We assess SiCL across various quantities of observational samples per graph during testing (100, 200, ..., 1000). The outcomes for both the skeleton prediction task and the CPDAG prediction task are depicted in Fig. \ref{fig:vtss}. It is evident that the model's performance enhances with the augmentation of sample size. These consistent upward trends suggest that SiCL exhibits stability and is not overly sensitive to changes in sample size.

\begin{figure*}[t]
     \centering
     \begin{subfigure}[b]{0.45\textwidth}
         \centering
         \includegraphics[width=\textwidth]{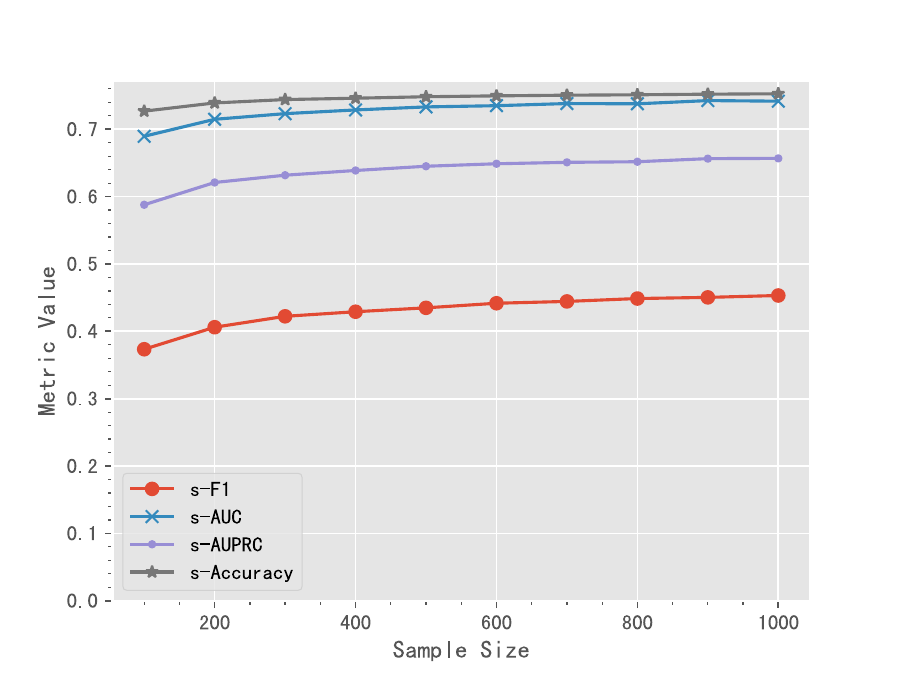}
         \caption{\color{black}Variation trends of skeleton predicton task performance on WS graph with varying sample sizes.}
         \label{fig:vtss1}
     \end{subfigure}
     \hfill
     \begin{subfigure}[b]{0.45\textwidth}
         \centering
         \includegraphics[width=\textwidth]{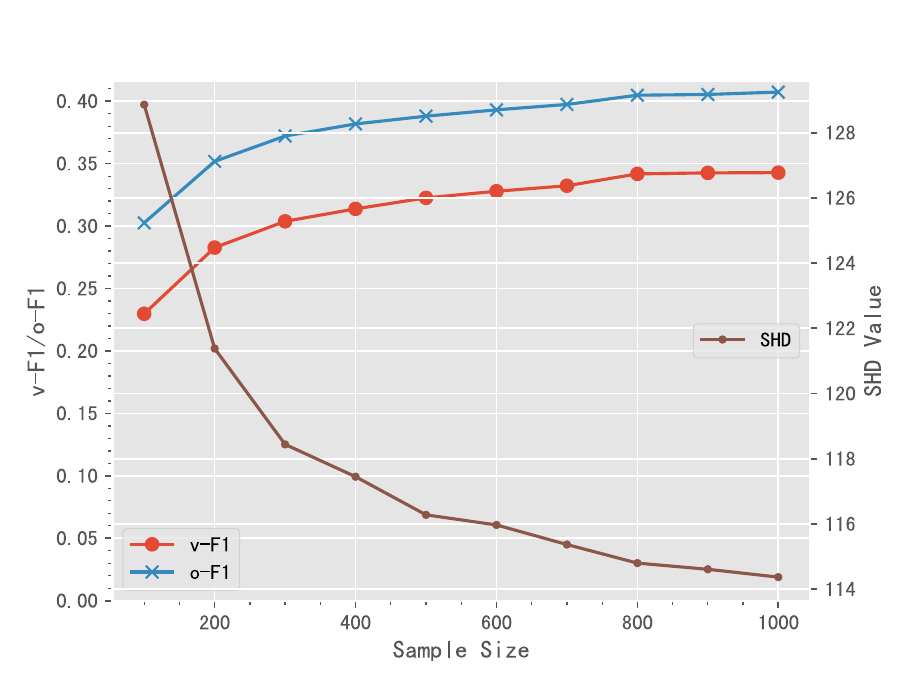}
         \caption{\color{black}Variation trends of CPDAG predicton task performance on WS graph with varying sample sizes.}
         \label{fig:vtss2}
     \end{subfigure}
         \begin{subfigure}[b]{0.45\textwidth}
         \centering
         \includegraphics[width=\textwidth]{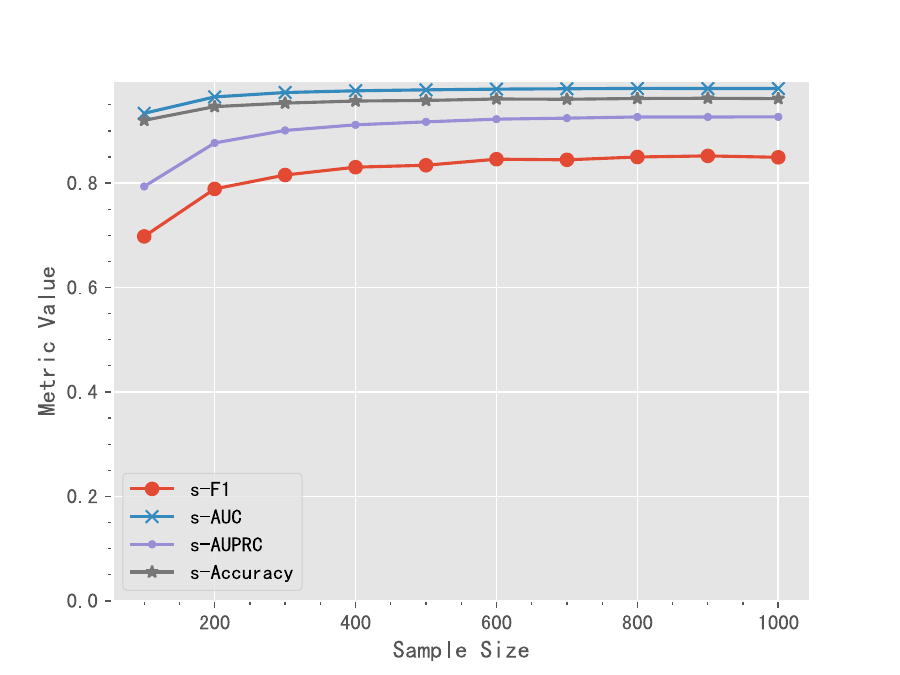}
         \caption{\color{black}Variation trends of skeleton predicton task performance on SBM graph with varying sample sizes.}
         \label{fig:vtss3}
     \end{subfigure}
     \hfill
     \begin{subfigure}[b]{0.45\textwidth}
         \centering
         \includegraphics[width=\textwidth]{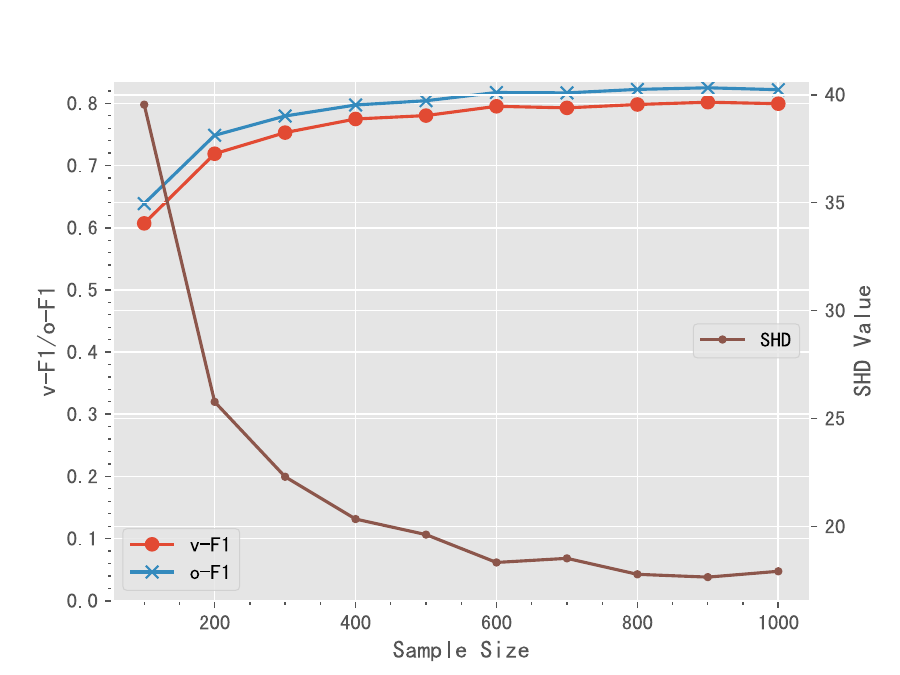}
         \caption{\color{black}Variation trends of CPDAG predicton task performance on SBM graph with varying sample sizes.}
         \label{fig:vtss4}
     \end{subfigure}
         \caption{\color{black}Variation trends of performance with varying sample sizes.}
        \label{fig:vtss}
\end{figure*}

\subsection{Varying Edge Density}
We evaluate SiCL over a range of edge densities in the test graphs, utilizing the SBM dataset, as it allows for the direct setting of average edge densities. The findings are presented in Fig. \ref{fig:vted}. It's apparent that the task is becomes more difficult as edge densities increase. However, the performance decline is not abrupt, indicating that SiCL's performance remains relatively stable across various edge densities, thereby confirming its versatility.
\begin{figure*}
     \centering
     \begin{subfigure}[b]{0.45\textwidth}
         \centering
         \includegraphics[width=\textwidth]{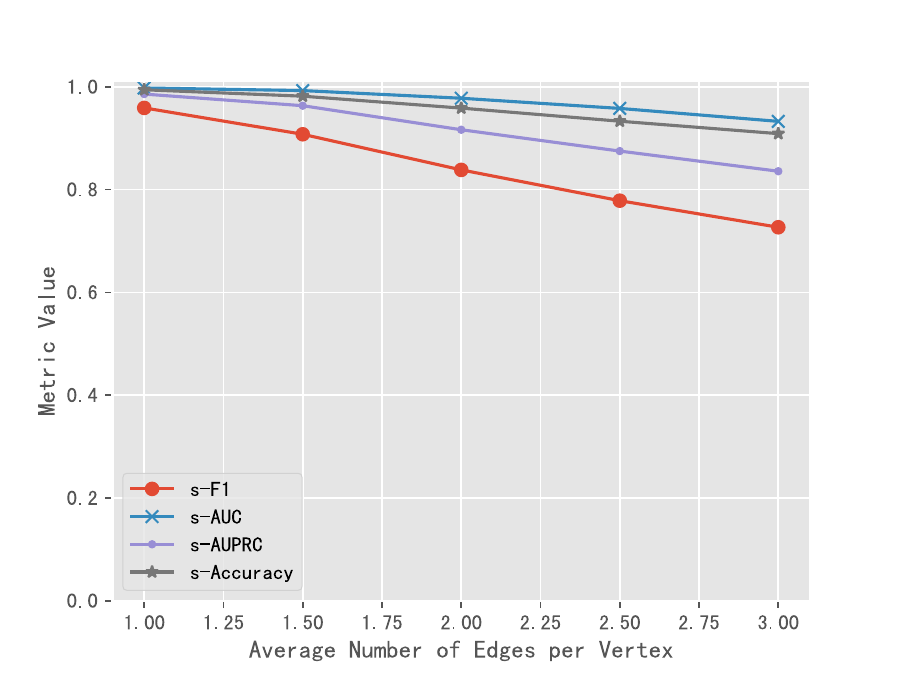}
         \caption{\color{black}Variation trends of skeleton predicton task performance on SBM graph with varying edge densities.}
         \label{fig:vted1}
     \end{subfigure}
     \hfill
     \begin{subfigure}[b]{0.45\textwidth}
         \centering
         \includegraphics[width=\textwidth]{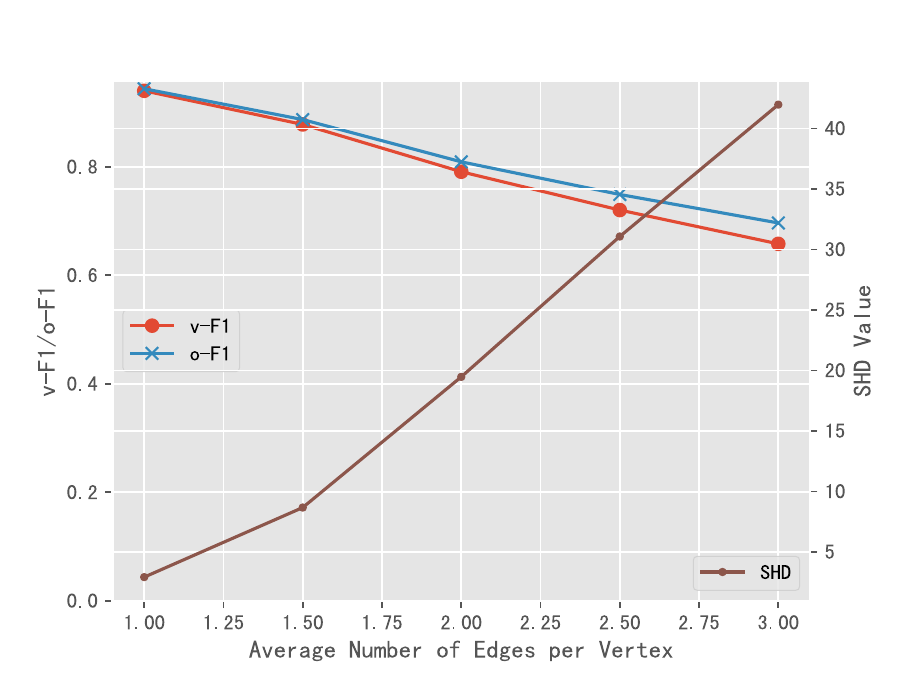}
         \caption{\color{black}Variation trends of CPDAG predicton task performance on SBM graph with varying edge densities.}
         \label{fig:vted2}
     \end{subfigure}
         \caption{\color{black}Variation trends of performance with varying edge densities.}
        \label{fig:vted}
\end{figure*}

\subsection{Generality on Testing Graph Sizes}
We offer an analytical perspective on the performance of the SiCL model when applied to larger WS-L-G graphs. 
It is important to highlight that the models were initially trained on graphs comprising 30 vertices, positioning this task within an out-of-distribution setting in terms of graph size. 
To establish a point of reference, we have included results from the PC algorithm as a baseline comparison.
These findings can be examined in Tab. \ref{tab:mpva}.
Despite the OOD conditions, SiCL maintains robust performance, reinforcing its scalability and the model's general applicability across varying graph sizes.

\begin{table}[t]\color{black}
\centering
% \resizebox{\linewidth}{!}{%
\begin{threeparttable}
\caption{\color{black}Performance comparison with varying amounts of graph sizes.}
\label{tab:mpva}
\begin{tabular}{l|ccc|ccc|ccc}
\toprule
Metric & \multicolumn{3}{c|}{s-F1$\uparrow$} & \multicolumn{3}{c|}{v-F1$\uparrow$} & \multicolumn{3}{c}{o-F1$\uparrow$} \\
Size & 50 & 70 & 100 & 50 & 70 & 100 & 50 & 70 & 100 \\
\midrule
PC       & $17.7$ & $14.8$ & $10.6$ & $6.4$ & $5.0$ & $3.7$ & $7.0$ & $5.6$ & $4.0$ \\
SiCL & $\mathbf{41.6}$ & $\mathbf{37.4}$ & $\mathbf{28.3}$ & $\mathbf{34.9}$ & $\mathbf{30.7}$ & $\mathbf{22.6}$ & $\mathbf{37.9}$ & $\mathbf{33.7}$ & $\mathbf{24.8}$ \\
\bottomrule
\end{tabular}
\end{threeparttable}
% }
\end{table}

\subsection{Acyclicity}
% \begin{wraptable}[8]{r}{9cm}
\begin{table}[t]\color{black}
\centering
% \resizebox{\linewidth}{!}{%
\begin{threeparttable}
\caption{\color{black}Count of cycles in the CPDAG predictions without post-processing of removing cycles.}
\label{tab:ccfc}
\begin{tabular}{@{}ccc@{}}
\toprule
Dataset & WS-L-G & SBM-L-G  \\
\midrule
Rate of Graphs with Cycles & $0.66 \pm 0.66 \%$&$0.00 \pm 0.00 \%$ \\
\bottomrule
\end{tabular}
\end{threeparttable}
% }

\end{table}
We provide an empirical evidence supporting of the rarity of cycles in the predictions. The experimental data presented in Tab. \ref{tab:ccfc} corroborates that cycles are infrequently observed in the predicted CPDAGs, even though without any post-processing on removing cycles.
\color{black}

\end{document}

% --- supplement: supplement.tex ---

% If your paper is accepted and the title of your paper is very long,
% the style will print as headings an error message. Use the following
% command to supply a shorter title of your paper so that it can be
% used as headings.
%
%\runningtitle{I use this title instead because the last one was very long}

% If your paper is accepted and the number of authors is large, the
% style will print as headings an error message. Use the following
% command to supply a shorter version of the authors names so that
% they can be used as headings (for example, use only the surnames)
%
%\runningauthor{Surname 1, Surname 2, Surname 3, ...., Surname n}

% Supplementary material: To improve readability, you must use a single-column format for the supplementary material.
\onecolumn
\aistatstitle{Instructions for Paper Submissions to AISTATS 2025: \\
Supplementary Materials}

\section{FORMATTING INSTRUCTIONS}

To prepare a supplementary pdf file, we ask the authors to use \texttt{aistats2025.sty} as a style file and to follow the same formatting instructions as in the main paper.
The only difference is that the supplementary material must be in a \emph{single-column} format.
You can use \texttt{supplement.tex} in our starter pack as a starting point, or append the supplementary content to the main paper and split the final PDF into two separate files.

Note that reviewers are under no obligation to examine your supplementary material.

\section{MISSING PROOFS}

The supplementary materials may contain detailed proofs of the results that are missing in the main paper.

\subsection{Proof of Lemma 3}

\textit{In this section, we present the detailed proof of Lemma 3 and then [ ... ]}

\section{ADDITIONAL EXPERIMENTS}

If you have additional experimental results, you may include them in the supplementary materials.

\subsection{The Effect of Regularization Parameter}

\textit{Our algorithm depends on the regularization parameter $\lambda$. Figure 1 below illustrates the effect of this parameter on the performance of our algorithm. As we can see, [ ... ]}

\vfill